%% file: main_deepthink_template.tex
\documentclass[11pt,letterpaper]{article}

\usepackage{deepthink}

\usepackage[nameinlink]{cleveref}

\input{macro}

\usepackage[numbers]{natbib}
\usepackage{authblk}
\usepackage{enumitem}

\definecolor{mint}{rgb}{0.24, 0.71, 0.54}

\title{Evaluating the Representation Space of Diffusion Models via Self-Supervised Principles}

\newcommand{\jointfirst}{\textsuperscript{\dag}}
\newcommand{\corrauth}{\textsuperscript{\ddag}}

\authorblock{
  \href{https://heimine.github.io/}{\textbf{Xiao Li}}\jointfirst,
  \href{https://umjiayx.github.io/}{\textbf{Yixuan Jia}}\jointfirst,
  \href{https://la0ka1.github.io/}{\textbf{Zekai Zhang}},
  \href{https://scholar.google.com/citations?user=RO2ZlG8AAAAJ&hl=en}{\textbf{Xiang Li}},
  \href{https://shilianghe007.github.io/}{\textbf{Lianghe Shi}},
  \href{https://jinxinzhou.github.io/}{\textbf{Jinxin Zhou}}\textsuperscript{1},
  \href{https://zhihuizhu.github.io/}{\textbf{Zhihui Zhu}}\textsuperscript{1},
  \href{https://liyueshen.engin.umich.edu/}{\textbf{Liyue Shen}},
  \href{https://qingqu.engin.umich.edu/}{\textbf{Qing Qu}}\corrauth
}

\affiliation{
  University of Michigan \quad $\cdot$ \quad \textsuperscript{1}Ohio State University
}

\authornote{
  \jointfirst\ Joint first author \quad
  \corrauth\ Corresponding author
}

\abstracttext{
\noindent Diffusion models have demonstrated remarkable generative capabilities and have also emerged as powerful self-supervised representation learners, yet the connection between these two abilities remains less explored. Drawing inspiration from self-supervised learning (SSL), we introduce a framework for jointly evaluating the representation and generation capabilities of diffusion models. Specifically, we decompose features into invariant and residual components and derive the Invariant Contamination Ratio (ICR), a Fisher-based metric that quantifies how residual variation contaminates invariant signal in feature space. We use this framework to analyze both discriminative and generative behavior of diffusion models. On the representation side, we find that invariance peaks at intermediate noise levels, which also yield the best downstream classification performance. On the generative side, we study how training transitions from genuine generalization to memorization in data-limited regimes, and show that ICR serves as a sensitive training-time indicator of early learning: increasing residual energy along Fisher directions marks the onset of memorization, detectable from training features alone without external evaluators or held-out test sets. Overall, our results show that diffusion models can be monitored from a self-supervised perspective through the geometry of their learned representations.
}

\keywords{Diffusion model, Representation learning, Self-supervised learning}

\date{\today}
\correspondence{\href{mailto:xlxiao@umich.edu}{xlxiao@umich.edu}}
\resources{\href{https://heimine.github.io/Diffusion_ICR_REP/}{Project Website}}
\DeepthinkSetHeaderFooterSep{0.15em}{0.20em}

\headerlogo{Deepthink_landscape_do_not_delete_compressed.png}{https://deepthink-umich.github.io}

\begin{document}

\makeDeepthinkHeader
\vspace{-0.2in}

\begin{figure*}[h]
    \begin{center}
        \includegraphics[width = 0.98\textwidth]{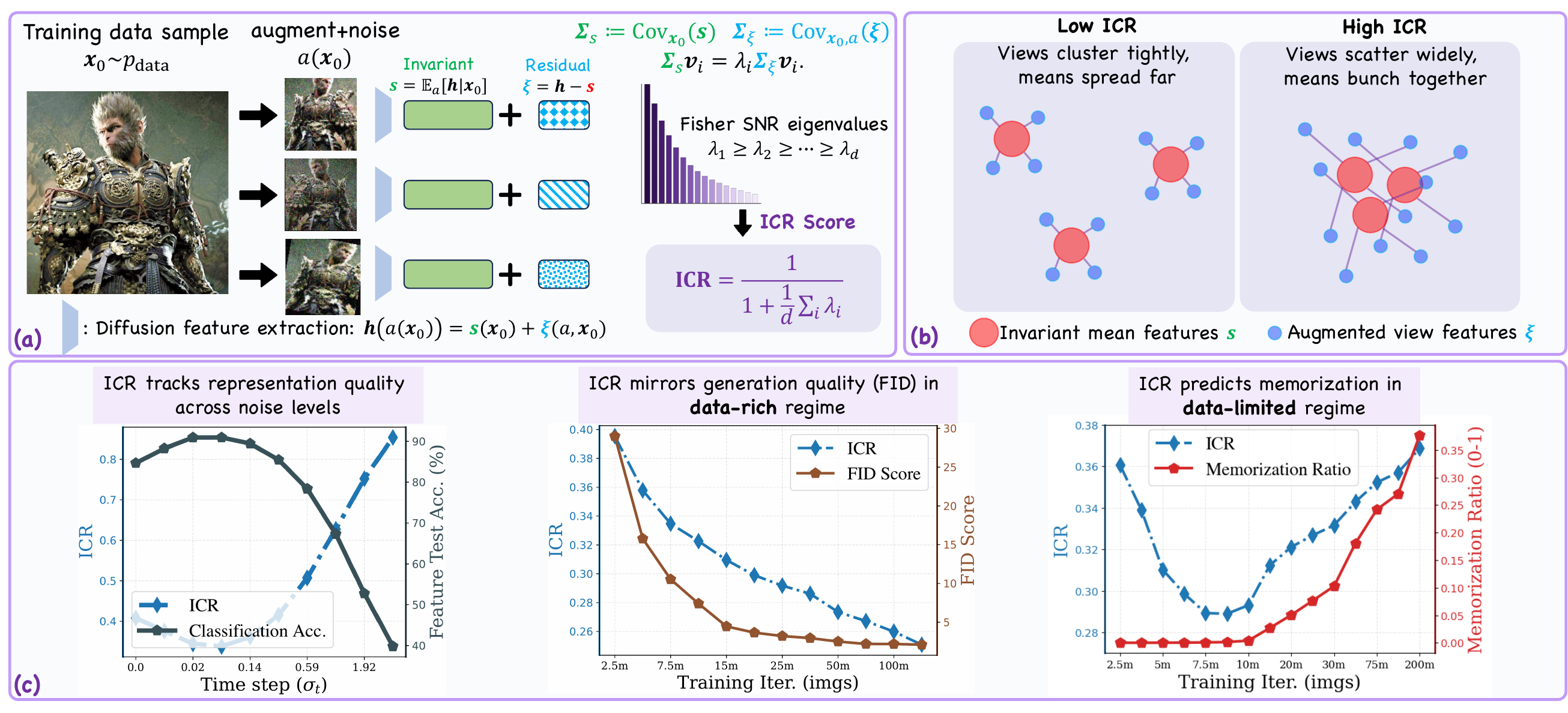}
    \end{center}
    \vspace{-0.2in}
    \caption{\footnotesize  \textbf{Overview of the $\mathrm{ICR}$ Framework}. Each training image is augmented and passed through the diffusion feature extractor, decomposing representations into an invariant component $\bm s$ and a residual $\bm \xi$; their covariances define $\mathrm{ICR}$ (a–b). $\mathrm{ICR}$ serves as a unified diagnostic: it identifies the optimal noise level for classification tasks, tracks generative quality without sampling, and anticipates memorization onset during training (c).}
    \vspace{-0.1in}
    \label{fig:final_teaser}
\end{figure*}

\newpage
\tableofcontents

\newpage
\section{Introduction}\label{sec:intro}
\input{sections/intro}

\section{Problem Setup}
\input{sections/preliminary}

\section{$\mathrm{ICR}$ across Noise Levels: a Semantic Window for Representation Learning}\label{sec:obs}
\input{sections/observation}

\vspace{-0.1in}
\section{Invariance and Expansion: From Generalization to Memorization}\label{sec:overfitting}
\input{sections/overfitting}

\vspace{-0.1in}
\section{Conclusion}\label{sec:conclusion}
\vspace{-0.05in}
\input{sections/conclusion}

\section*{Acknowledgment}

XL, YJ, XL, LS, ZZ (UM) and QQ acknowledge support from NSF CAREER CCF-2143904, NSF CCF-2212066, NSF CCF-2212326, NSF IIS 2402950, ONR N000142512339, and Google Research Scholar and Google TPU Award. LS and QQ acknowledge support from DARPA HR00112520042.
JZ and ZZ (OSU) acknowledge funding support from NSF IIS 2312840 and IIS 2402952.
LS acknowledges funding support from NSF IIS 2435746.
We also thank all the anonymous reviewers for their valuable suggestions and fruitful discussions.


{\small 
\bibliographystyle{ieeetr}
\bibliography{ref}
}

\appendix
\input{sections/appendices/app_main}

\end{document}

%% file: macro.tex
\usepackage{url}

\usepackage{amsmath,amsthm,amssymb,amsbsy,amsfonts,amscd,bm}
\usepackage{mathtools}
\usepackage{xcolor}
\usepackage{color}
\usepackage{graphicx}
\graphicspath{{./figs/}}
\usepackage{algorithm}
\usepackage{algorithmic}
\usepackage{comment}
\usepackage{multirow}
\usepackage{enumitem}
\usepackage{fancyhdr}
\usepackage{subcaption}
\usepackage{wrapfig}
\usepackage{tcolorbox}

\makeatletter
\newcommand{\IfEnvUndefined}[2]{%
  \@ifundefined{#1}{#2}{}%
}
\makeatother

\makeatletter
\IfEnvUndefined{thm}{}
\IfEnvUndefined{lemma}{}
\IfEnvUndefined{cor}{}
\IfEnvUndefined{prop}{}
\IfEnvUndefined{proposition}{\newtheorem{proposition}{Proposition}}
\IfEnvUndefined{definition}{}
\IfEnvUndefined{defi}{}
\IfEnvUndefined{assum}{}
\IfEnvUndefined{goal}{}
\IfEnvUndefined{theorem}{}
\IfEnvUndefined{lem}{}
\makeatother

\theoremstyle{remark}
\makeatletter
\IfEnvUndefined{remark}{}
\makeatother

\makeatletter
\@ifundefined{icode}{%
  \newtcbox{\icode}{on line, verbatim,
    colback=gray!10, colframe=gray!60,
    boxsep=0.7pt, left=2pt, right=2pt, top=0.6pt, bottom=0.6pt,
    arc=2pt}%
}{}
\makeatother
\definecolor{tabblue}{HTML}{1F77B4}
\definecolor{tabred}{HTML}{D62728}
\definecolor{tabgreen}{HTML}{15B01A}
\definecolor{brown}{HTML}{935430}
\definecolor{slate}{HTML}{395159}
\definecolor{mint}{rgb}{0.24, 0.71, 0.54}

\newcommand{\e}{\begin{equation}}
\newcommand{\ee}{\end{equation}}
\newcommand{\en}{\begin{equation*}}
\newcommand{\een}{\end{equation*}}
\newcommand{\eqn}{\begin{eqnarray}}
\newcommand{\eeqn}{\end{eqnarray}}
\newcommand{\bmat}{\begin{bmatrix}}
\newcommand{\emat}{\end{bmatrix}}

\DeclareMathAlphabet\mathbfcal{OMS}{cmsy}{b}{n}

\newcommand{\E}{\operatorname{\mathbb{E}}}

\newcommand{\Cov}{\mathrm{Cov}}


\DeclareMathOperator{\Tr}{Tr}

\graphicspath{{./figs/}}

\newlength{\imgwidth}
\makeatletter
\@ifundefined{ifthenelse}{%
  \newcommand{\twoCol}[2]{#2}
}{%
  \newboolean{twoColVersion}
  \setboolean{twoColVersion}{false}
  \newcommand{\twoCol}[2]{\ifthenelse{\boolean{twoColVersion}}{#1}{#2}}
}
\makeatother

%% file: sections/intro.tex
In recent years, diffusion models \citep{sohl2015deep,ho2020denoising,song2020denoising} have achieved remarkable success in generative modeling, serving as the backbone for inverse problems~\citep{alkhouri2024diffusion,jia2026forcingdas,song2024solving,li2024decoupled} and many prominent large-scale generative systems, such as Stable Diffusion, Flux, and Veo~\citep{ho2020denoising,watson2023novo,lou2023discrete,labs2025flux1kontextflowmatching,google2025veo3}. Beyond their generative capabilities, recent studies~\citep{baranchuk2021label,xiang2023denoising,mukhopadhyay2023diffusion,chen2024deconstructing,tang2023emergent} have demonstrated the superior unsupervised representation learning abilities of diffusion models, where the diffusion representation is extracted from the bottleneck layer at certain timesteps of the learned denoiser. These works leverage diffusion representations for various downstream tasks, including classification, segmentation, and image correspondence, often achieving performance comparable to or even exceeding established self-supervised learning (SSL) methods. In parallel, regularizing diffusion representations using powerful self-supervised learners, such as DINOv2 \citep{oquab2023dinov2} and MAE \citep{he2022masked}, can significantly improve the training efficiency and generation quality of diffusion models \citep{yu2024representation,singh2025matters}. This highlights the strong interplay between representation learning and generative modeling within these paradigms.
 
Despite these advances, the representation learning paradigms of diffusion models and traditional SSL remain quite different. Diffusion models are trained with a denoising objective, recovering clean signals from Gaussian corrupted inputs, whereas most SSL methods~\citep{bardes2022vicreg,chen2020simple,chen2024exploring,zbontar2021barlow,oquab2023dinov2} are explicitly designed to enforce invariance to data augmentations while preserving a rich, diverse embedding space. The distinction between training objectives raises natural questions about diffusion representation spaces: to what extent do they implicitly capture the beneficial characteristics directly optimized in SSL, and how do these properties evolve across varying noise levels and throughout the learning dynamics?

Bridging diffusion representations with those studied in standard SSL can help clarify how diffusion training shapes features and how these features might be further improved to benefit diffusion models. Moreover, adopting a representation-centered viewpoint of diffusion models offers an intrinsic way to understand these systems: if diffusion models are indeed powerful representation learners, the properties of their learned representations should encode clear signatures of whether the model is capturing low-dimensional structure in image manifolds~\citep{wang2024diffusion,li2025understanding} rather than merely overfitting to the idiosyncratic details of the training data.


Moreover, this perspective is particularly helpful for understanding and monitoring the \emph{generalizability} of diffusion models during training. Evaluating how well a model goes beyond memorizing the training set is practically difficult, especially in data-limited regimes where recent work has documented a distinct ``early learning'' phase~\citep{li2023generalization,li2024understanding,zhang2025understanding,baptista2025memorization,bonnaire2026diffusion}: the model first learns to generalize and then gradually starts to memorize individual samples. In this regard, prior work~\cite{stein2023exposing} showed that standard generation metrics such as Fréchet distance (FID) are not reliable memorization detectors, while exhaustive nearest neighbor tests~\citep{pizzi2022self,zhang2024emergence} rely on large numbers of generated samples and are often expensive to run. By importing insights from SSL and evaluating diffusion models through the geometry of their learned representations, we obtain intrinsic, training time signals that track the quality of the representation space. These signals offer a practical way to gauge generalized generation quality as training progresses and to identify good stopping points before overfitting dominates.

\paragraph{Summary of contributions.}
In this work we revisit diffusion models from a self-supervised representation learning perspective. Guided by classic SSL principles, we focus on two properties of internal features: \emph{representation invariance}, reflecting the stability of shared content across random perturbations, and \emph{representation expansion}, reflecting how well the features spread out in the available embedding space. We capture these two properties with a new intrinsic metric, the Invariant Contamination Ratio ($\mathrm{ICR}$), which measures how much augmentation and noise sensitive variation contaminates the stable part of the representation space. To construct $\mathrm{ICR}$, we introduce an \emph{invariance–residual} decomposition of diffusion features that separates perturbation invariant structure from residual variation. Because $\mathrm{ICR}$ is label-free and can be computed entirely from training features, it can be monitored throughout training and across noise levels. Empirically, we find that $\mathrm{ICR}$ provides a reliable proxy for noise level dependent downstream representation performance and cleanly separates generalization from memorization during training in data-limited regimes. In summary, our main contributions are as follows:

\begin{itemize}[leftmargin=*]

    \item \textbf{A representation-based evaluation metric for diffusion models.}  
    We introduce an invariance–residual decomposition of diffusion representations and from it define the Invariant Contamination Ratio ($\mathrm{ICR}$), a single label-free scalar that measures how much of the representation space is occupied by augmentation and noise-sensitive variation rather than stable structure. $\mathrm{ICR}$ can be computed solely from training features without labels or external networks.

    \item \textbf{Finding the optimal representation across noise levels.}
    On standard image benchmarks, we show that the diffusion noise schedule admits an intermediate \emph{semantic window} where $\mathrm{ICR}$ is minimized and linear classification accuracy is maximized. This gives a simple SSL-based rule to select noise scales that yield the strongest diffusion features for downstream tasks.

    \item \textbf{Tracking generalization and memorization in training dynamics.}
    By following $\mathrm{ICR}$ over training, we observe distinct learning phases: in data-rich regimes it decreases steadily with improving generative quality, while in data-limited regimes it exhibits a U-shaped early learning pattern that precedes the rise of memorization. Thus $\mathrm{ICR}$ provides a practical, label-free early stopping signal, and its decomposition reveals that feature expansion during training is driven by invariant structure when data are abundant and by residual variation when data are scarce.
\end{itemize}

%% file: sections/preliminary.tex
\begin{figure*}[t]
    \begin{center}
        \includegraphics[width = 0.93\textwidth]{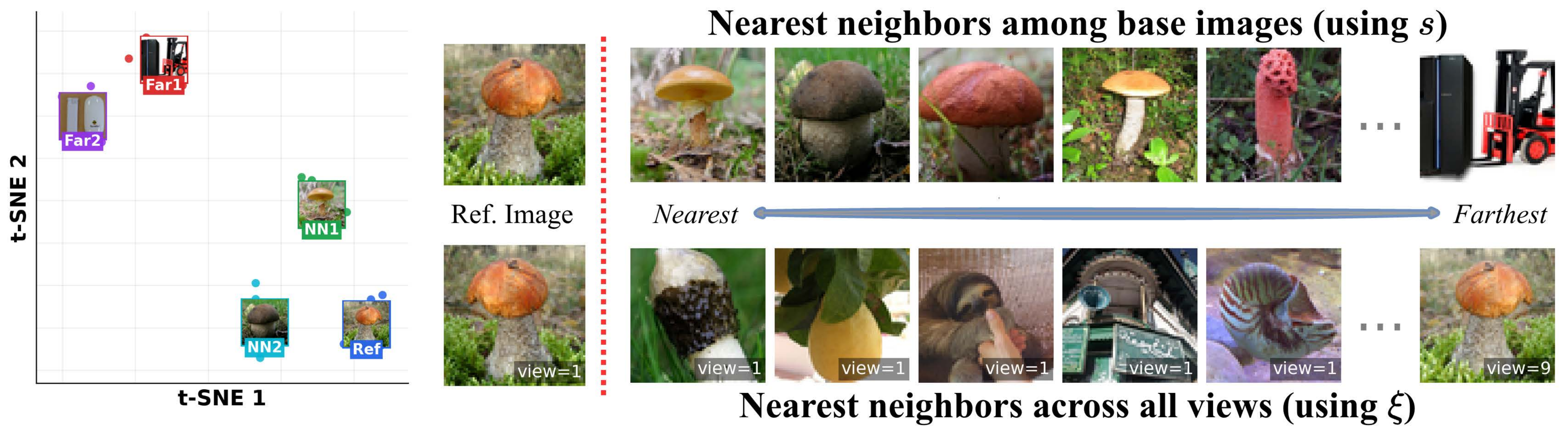}
    \end{center}
    \vspace{-0.1in}
    \caption{\textbf{Nearest neighbors of invariant and residual components on ImageNet $64\times64$.} We use a pretrained EDM diffusion model and, for each ImageNet training image, sample 9 augmented views and extract bottleneck representations. \textbf{Left:} t-SNE visualization of invariant representations $\bm s$; views of the same base image form tight clusters, and the marked nearest and farthest examples reflect their relative positions in this space. \textbf{Top right:} For a reference image (left), nearest and farthest neighbors among the base images are selected using cosine similarity on $\bm s$. \textbf{Bottom right:} For a reference augmented view (left),  nearest and farthest neighbors across all augmented views are selected using cosine similarity on $\bm\xi$. Neighbors based on $\bm s$ tend to be semantically similar to the query, whereas neighbors based on $\bm\xi$ often appear semantically unrelated.}
    \vspace{-0.1in}
    \label{fig:semantic-nuisance-decomp}
\end{figure*}

\paragraph{Preliminaries on diffusion models.}
Diffusion models define a forward process that gradually perturbs data $\bm{x}_0 \sim p_{\text{data}}$ toward a Gaussian distribution via the stochastic differential equation
\begin{align*}
    \mathrm{d}\bm{x}_t = f(t)\bm{x}_t \, \mathrm{d}t + g(t) \, \mathrm{d}\bm{w}_t,\quad t \in [0,1],
\end{align*}
where $f$ and $g$ are scalar functions and $\{\bm{w}_t\}$ is a standard Wiener process. Let $p_t$ denote the density of $\bm{x}_t$ and note that $p_0 = p_{\text{data}}$. For simplicity, we consider the variance preserving setting $\bm{x}_t = \bm{x}_0 + \sigma_t \bm{\epsilon}$ with $\bm{\epsilon} \sim \mathcal{N}(\bm{0}, \bm{I})$. The reverse time process that maps noise back to data uses the score $\nabla \log p_t(\bm{x}_t)$ and is given by the reverse SDE \citep{anderson1982reverse}
\begin{align*}
    \mathrm{d}\bm{x}_t = \bigl(f(t)\bm{x}_t - g^2(t) \nabla \log p_t(\bm{x}_t)\bigr) \mathrm{d}t + g(t)\, \mathrm{d}\bar{\bm{w}}_t,
\end{align*}
where $\{\bar{\bm{w}}_t\}$ is an independent Wiener process. This enables diffusion models to generate new samples from the underlying data distribution $p_{\text{data}}$ by initializing from pure Gaussian noise and iteratively denoising via the score function. 

\paragraph{Training loss of diffusion models.}
Modern diffusion models are typically trained to approximate the score function $\nabla \log p_t(\bm x_t)$. By Tweedie's formula \citep{efron2011tweedie},
\begin{align} \label{eq:Tweedie} 
\E\left[ \bm x_0 \mid \bm x_t \right] = \bm x_t + \sigma_t^2 \nabla \log p_t(\bm x_t),
\end{align}
this is equivalent to learning the posterior mean $\E[\bm x_0 \mid \bm x_t]$ via a denoising autoencoder $\bm x_{\bm \theta}(\bm x_t, t)$ \citep{chen2024deconstructing, xiang2023denoising, kadkhodaie2023generalization}. Concretely, we minimize the weighted denoising loss
\begin{align}
   \min_{\bm \theta}\; \sum_{i=1}^N \int_0^1 \lambda_t \, \E_{\bm \epsilon } \left[\left\|\bm x_{\bm \theta}(\bm x_t^{(i)}, t) -  \bm x_0^{(i)} \right\|^2\right] \mathrm{d}t,
   \label{eq:dae_loss}
\end{align}
where $\bm x_0^{(i)} \overset{i.i.d.}{\sim} p_{\rm data}$ for $i=1,\dots,N$ and $\lambda_t$ weights different noise levels.

\paragraph{Layer selection for extracting representations.}
We freeze the diffusion backbone and extract representations from the layer that gives the strongest downstream performance according to \citep{xiang2023denoising}. In practice, this corresponds to a layer near the bottleneck of the U-Net architecture \citep{ronneberger2015u,karras2022elucidating} and the middle transformer block of SiT \citep{ma2024sit}.

\section{Representation-Level Evaluation Based on Self-Supervised Principles}\label{sec:framework}


As discussed in the introduction, recent work shows that diffusion models can act as strong self supervised representation learners, supporting competitive performance across various downstream tasks~\citep{baranchuk2021label,xiang2023denoising,chen2024deconstructing,fuest2024diffusion}. However, their training paradigm differs markedly from standard self-supervised learning (SSL): while most SSL methods rely on explicit contrastive or predictive objectives to shape the embedding space~\citep{chen2020simple,he2020momentum,grill2020bootstrap,oquab2023dinov2}, diffusion models are trained with a denoising objective that reconstructs clean signals from Gaussian corrupted inputs. This difference raises a fundamental question: 
\begin{tcolorbox} 
    To what extent does the denoising objective naturally satisfy the geometric properties of "good" representations sought in the SSL literature? 
\end{tcolorbox}

In diffusion models, the internal representations are high-dimensional and evolve across different noise scales, making it non-trivial to separate stable information from idiosyncratic variation. To evaluate this, we propose an evaluation metric rooted in the principles of modern image-based SSL.
\subsection{What Makes a Good Representation in SSL?}\label{subsec:ssl_principle}
Modern SSL methods \citep{oord2018representation, wang2020understanding, chen2020simple, bardes2022vicreg, zbontar2021barlow, oquab2023dinov2} are often built around two complementary principles:
\begin{itemize}[leftmargin=*] 
    \item \textbf{Representation invariance}: Representations extracted from different stochastic perturbations of a sample are encouraged to remain stable in the embedding space. 
    \item \textbf{Representation expansion}: Representations should maintain a rich, spread-out structure across different images. The embedding distribution should avoid dimensional collapse and utilize many directions in the representation space to preserve unique image identities. 
\end{itemize}
In the rest of this work, we use these principles to track how diffusion representations evolve across noise levels and training. We introduce a simple decomposition that splits each representation into a \emph{perturbation invariant component}, stable across noisy and augmented views, and a \emph{residual component} that captures variation induced by these perturbations.

\vspace{-0.05in}
\subsection{A Geometric Framework for Representation Evaluation}\label{subsec:framework}

Guided by the SSL principles in \Cref{subsec:ssl_principle}, we seek a representation-level metric that can be monitored across noise levels and training, and that reflects how much of the active representation space is devoted to perturbation invariant structure. Informally, we want this diagnostic to (i) measure \emph{relative} invariance rather than an absolute distance scale, (ii) be robust to overall representation expansion during training, and (iii) remain label-free and efficiently computable from representations. A natural starting point is the Alignment and Uniformity criteria \citep{wang2020understanding}, which have been highly successful in characterizing contrastive SSL encoders. However, Alignment is an absolute squared distance between two augmented views of the same image and grows when representations take more directions, so it can increase even when the representation becomes more semantically stable, while Uniformity only measures how spread out representations are and does not distinguish invariant structure from augmentation-sensitive noise. (see Appendix~\ref{app:alignment_uniformity} for details).

These limitations motivate a different construction that explicitly separates invariant information from view-specific variation. Instead of working directly with distances between raw representations, we first decompose the representation space into stable and varying components, then leverage the spectral properties of these components to derive a summary metric for representations.






\paragraph{Invariant and residual decomposition in representation space.}
For each training image $\bm x_0 \sim p_{\mathrm{data}}$, let $a \sim \mathcal A$ denote a random perturbation encompassing both standard semantics-preserving transformations \citep{chen2020simple, he2020momentum} and the additive Gaussian noise $\epsilon \sim \mathcal{N}(0, \sigma_t^2 \bm I)$ injected by the diffusion objective\footnote{Specifically, we first apply augmentations to $\bm x_0$ and then add Gaussian noise to the augmented view to obtain $a(\bm x_0)$.}. Let $\bm h(\cdot) \in \mathbb R^d$ be the representation extracted from a fixed layer of the diffusion model, and consider the random representation $\bm h(a(\bm x_0))$ induced by the stochasticity of the perturbation $a$. We decompose this representation into its conditional mean and a residual:
\begin{align}
    \begin{split}\label{decomposition}
        \bm s(\bm x_0) &\coloneqq \mathbb{E}_a\big[\bm h(a(\bm x_0)) \mid \bm x_0\big], \\
    \bm \xi(a,\bm x_0) &\coloneqq \bm h(a(\bm x_0)) - \bm s(\bm x_0),
    \end{split}
\end{align}
yielding the additive form $\bm h(a(\bm x_0)) = \bm s(\bm x_0) + \bm \xi(a,\bm x_0)$. In this decomposition, $\bm s(\bm x_0)$ is the \emph{invariant component}, which filters out transient variations to capture attributes resilient to corruption. Conversely, $\bm \xi(a,\bm x_0)$ is the \emph{residual component} capturing the specific idiosyncratic variations of a single noisy view. 

Figure~\ref{fig:semantic-nuisance-decomp} provides empirical support for this interpretation: multiple perturbed views of a single image form a cluster in the representation space centered at $\bm s(\bm x_0)$. Notably, nearest-neighbor searches based on $\bm s(\bm x_0)$ retrieve semantically related images, whereas neighbors based solely on the residual $\bm \xi(a,\bm x_0)$ appear visually unrelated and lack shared category structure. Since $\E[\bm \xi \mid \bm x_0] = \bm 0$, the law of total covariance implies that the total representation covariance $\bm\Sigma_h$ decomposes into two components:
\begin{align*}
    \bm\Sigma_h &= \bm\Sigma_s + \bm\Sigma_{\xi}, \\
    \text{where} \quad 
    \bm\Sigma_s &\coloneqq \mathrm{Cov}_{\bm x_0}(\bm s), \quad 
    \bm\Sigma_\xi \coloneqq \mathrm{Cov}_{\bm x_0,a}(\bm \xi).
\end{align*}
This decomposition allows us to translate core SSL principles into geometric properties of the representation space:
\begin{itemize}[leftmargin=*]
    \item \textbf{Representation expansion} refers to the growth of the total covariance $\bm\Sigma_h = \bm\Sigma_s + \bm\Sigma_{\xi}$, which characterizes how features spread in the representation space across data samples. More specifically, it describes the extent to which the feature covariance occupies multiple directions, i.e., whether the representation is low-rank or broadly distributed in the embedding space. In practice, we track its trace, $\Tr(\bm\Sigma_h)$, which measures the total variance (energy) of the representation and reflects how much of the high-dimensional feature space the model utilizes for distinguishing between different images and their perturbed views.
    

    \item \textbf{Representation invariance} is measured by the relative dominance of $\bm\Sigma_s$ over $\bm\Sigma_\xi$. Unlike traditional SSL, diffusion models require a noise-dependent balance: at low noise levels, a larger residual $\bm\Sigma_\xi$ is necessary for reconstructing fine-grained pixel details, whereas at intermediate noise levels, a higher degree of invariance is desired to capture stable semantic structures.
\end{itemize}

\begin{figure*}[t]
    \begin{center}
    \begin{subfigure}{0.32\textwidth}
    \includegraphics[width = 0.995\textwidth]{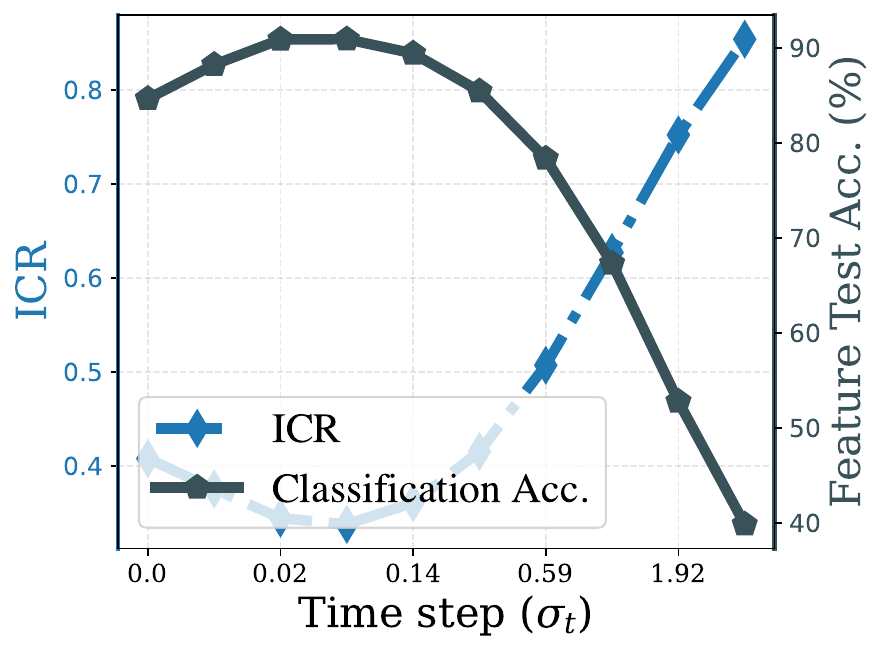}
    \caption{CIFAR10} 
    \end{subfigure} 
    \begin{subfigure}{0.32\textwidth}
    \includegraphics[width = 0.995\textwidth]{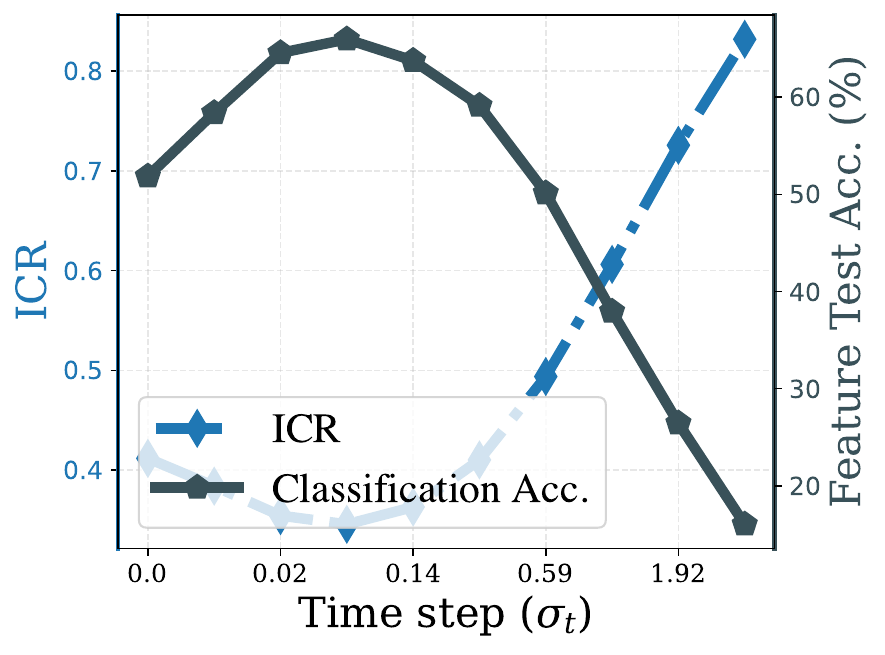}
    \caption{CIFAR100} 
    \end{subfigure} 
    \begin{subfigure}{0.32\textwidth}
    \includegraphics[width = 0.995\textwidth]{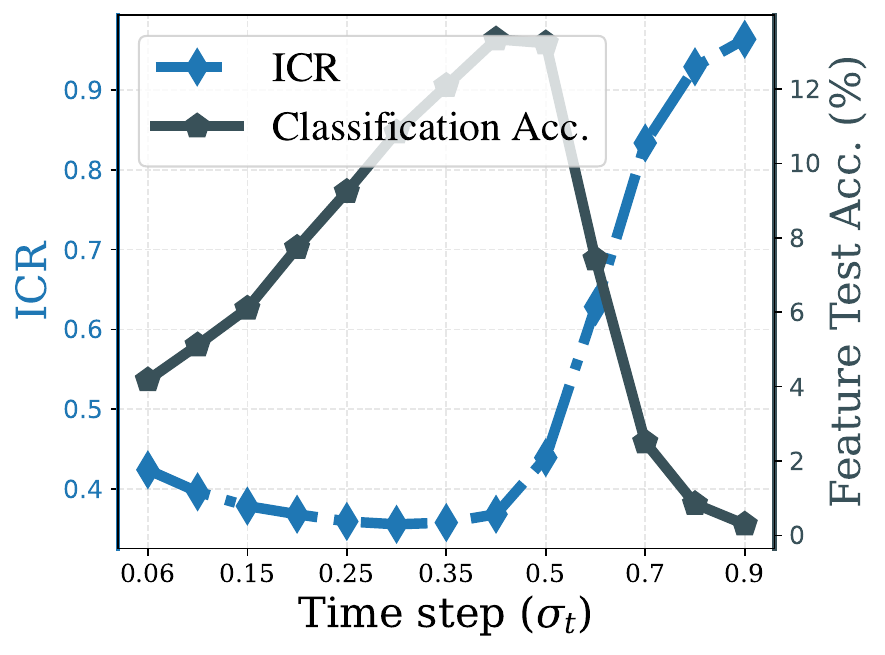}
    \caption{ImageNet} 
    \end{subfigure} 
    \end{center}
    \vspace{-0.1in}
\caption{\textbf{Correspondence between ICR and classification accuracy across noise levels.} For each pretrained backbone (EDM \citep{karras2022elucidating} on CIFAR10 and CIFAR100, SiT-XL/2 \citep{ma2024sit} on ImageNet), we extract bottleneck representations at multiple noise levels $\sigma_t$. At each $\sigma_t$, we estimate $\mathrm{ICR}$ {\color{tabblue} (blue)} using a subset of training representations and train a classifier on the full training representations, reporting accuracy on the test set {\color{slate} (slate)}. Across datasets, the noise levels that minimize $\mathrm{ICR}$ coincide with those that maximize classification accuracy. }
\vspace{-0.05in}
\label{fig:acc_icr}
\end{figure*}

\paragraph{Fisher directions and invariant signal-to-noise ratios.}
Given the invariant and residual covariances $(\bm\Sigma_s, \bm\Sigma_{\xi})$, we consider the generalized eigenproblem \citep{fukunaga1990}:
\begin{align*}
    \bm\Sigma_s \bm v_i = \lambda_i\, \bm\Sigma_{\xi} \bm v_i,\qquad 
    \lambda_1 \ge \lambda_2 \ge \cdots \ge \lambda_d \ge 0,
\end{align*}
with eigenvectors $\bm v_i$ orthogonal in the $\bm\Sigma_{\xi}$ inner product. In practice, these matrices are estimated empirically across the training dataset, where $\bm\Sigma_{\xi}$ aggregates residual variations within each sample and $\bm\Sigma_s$ captures the spread of invariant identities across all samples. Each $\bm v_i$ represents a direction in representation space that is optimized to maximize the ratio of invariant signal energy to residual variation. Specifically, the corresponding eigenvalue $\lambda_i$ admits the Rayleigh quotient \citep{horn2012matrix} representation\footnote{We assume $\bm\Sigma_{\xi} \succ \bm 0$; in practice we can add a small $\tau > 0$ and replace $\bm\Sigma_{\xi}$ by $\bm\Sigma_{\xi} + \tau \bm I$ to ensure invertibility.}:
\begin{align}
    \lambda_i 
    = \max_{\bm v \neq 0,\ \bm v \perp_{\bm\Sigma_{\xi}} \{\bm v_1,\dots,\bm v_{i-1}\}}
      \frac{\bm v^{\top}\bm\Sigma_s \bm v}{\bm v^{\top} \bm\Sigma_{\xi}\bm v}.
    \label{eq:fisher-lda}
\end{align}
In this framework, a generalized eigenvalue $\lambda_i$ measures the \emph{invariant signal-to-noise ratio} along the corresponding Fisher direction \citep{fisher1936} $\bm v_i$: it compares the variance of the stable invariant component $\bm v^{\top}\bm\Sigma_s \bm v$ to the residual variance $\bm v^{\top}\bm\Sigma_{\xi}\bm v$ in that specific direction. This follows the same generalized eigenstructure as classical Fisher Linear Discriminant Analysis \citep{fisher1936}, where $\bm\Sigma_s$ and $\bm\Sigma_{\xi}$ play roles analogous to between-class and within-class covariances, respectively, with each individual image effectively acting as its own class.

\paragraph{Invariant Contamination Ratio (ICR).} 
The collection of generalized eigenvalues $\{\lambda_1, \dots, \lambda_d\}$ provides a directional profile of how strongly invariant structures dominate residual variations. To summarize this into a single, trackable scalar that quantifies the health of the representation, we define the \emph{Invariant Contamination Ratio} (ICR):
\begin{equation}
    \text{ICR} \coloneqq \frac{1}{1 + \frac{1}{d} \sum_{i=1}^{d} \lambda_i}.
\end{equation}
The term $\frac{1}{d} \sum \lambda_i$ represents the average invariant signal-to-noise ratio across all Fisher directions. When the invariant component $\bm s$ dominates the residual $\bm\xi$ across the majority of directions, this average is large, resulting in a low ICR. Conversely, as residual variation (or ``contamination'') increases and begins to occupy a substantial portion of the representation space, the ICR approaches 1.\footnote{This choice matches the convention of generative metrics like FID, where lower values indicate a ``cleaner'' and more robust representation.}  We note that in practice, we estimate $\bm\Sigma_s$ and $\bm\Sigma_{\xi}$ from as few as two augmentations per image and a subset of training representations; implementation details are given in \Cref{app:icr_cal_detail}.

%% file: sections/observation.tex
The denoising objective of diffusion models is inherently multiscale: for each clean image $\bm x_0$ the model sees a family of corrupted inputs $\{\bm x_t\}$ indexed by the noise level $\sigma_t$, and thus induces a family of representations $\bm h_t(a(\bm x_0))$. In this section we use $\mathrm{ICR}$ to answer two questions:
\vspace{-0.1in}
\begin{itemize}[leftmargin=*]
    \item \emph{How does relative invariance vary across the diffusion noise schedule?}
    \item \emph{Can this internal measure predict which noise levels yield the best downstream representations?}
\end{itemize}
\vspace{-0.1in}

\begin{figure*}[t]
    \begin{center}
    \begin{subfigure}{0.43\textwidth}
        \includegraphics[width = 0.955\textwidth]{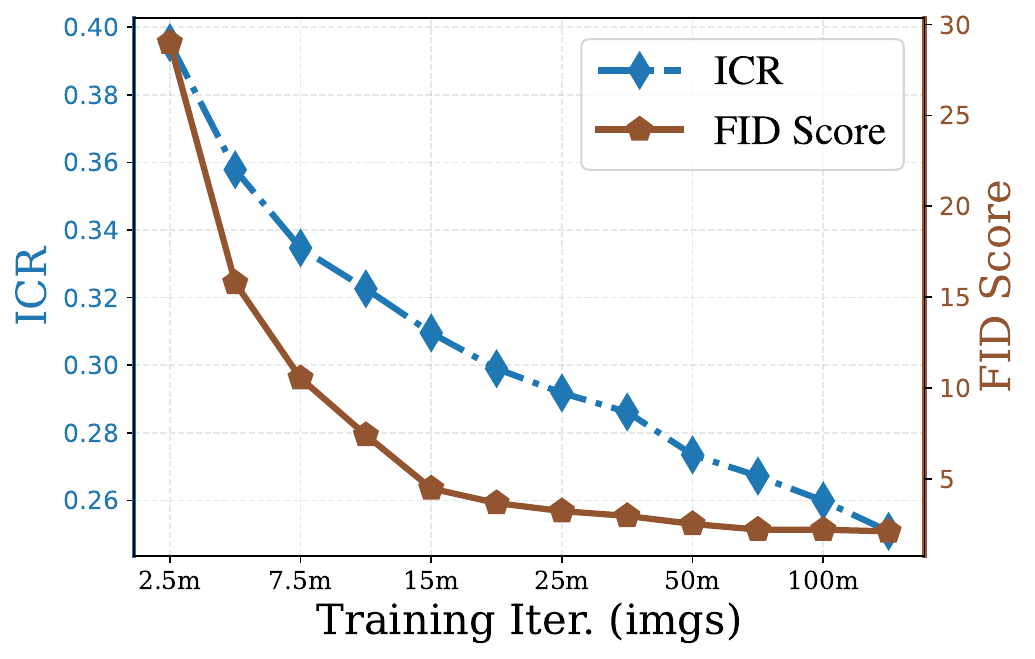}
        \caption{CIFAR10, EDM, $N = 50\text{K}$} 
    \end{subfigure} \quad
    \begin{subfigure}{0.43\textwidth}
        \includegraphics[width = 0.955\textwidth]{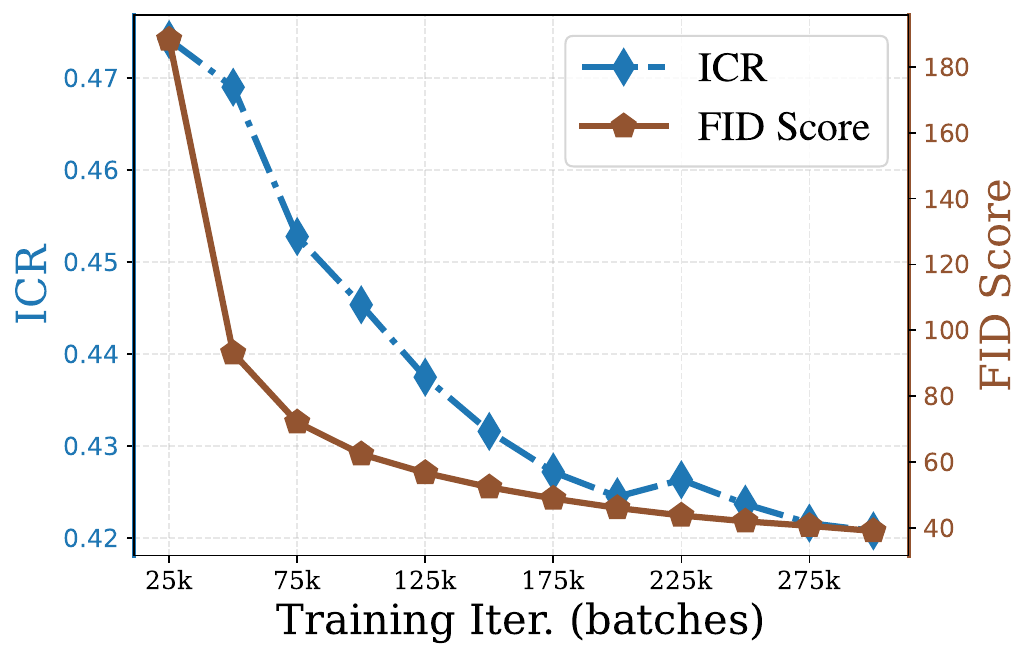}
        \caption{ImageNet-256, SiT-B/2, $N = 1.28\text{M}$}
    \end{subfigure}
    \end{center}
    \vspace{-0.1in}
    \caption{\textbf{$\mathrm{ICR}$ and FID dynamics in data rich diffusion training.} We monitor generative performance (via FID) and $\mathrm{ICR}$ for EDM and SiT-B/2 based diffusion models trained on the full CIFAR10 and ImageNet datasets as training progresses. Both $\mathrm{ICR}$ {\color{tabblue}(blue)} and FID {\color{brown}(brown)} exhibit a monotonically decreasing trend, indicating improving internal representation invariance and sample quality over the course of training.}
    \vspace{-0.2in}
    \label{fig:icr_fid_rich}
\end{figure*}

\paragraph{ICR predicts classification performance and reveals a semantic window.}
We start from pretrained diffusion backbones on CIFAR datasets \citep{krizhevsky2009learning} and ImageNet \citep{deng2009imagenet}. For each noise level $\sigma_t$, we estimate $\mathrm{ICR}(\sigma_t)$ using a subset of training representations extracted from inputs corrupted at noise level $\sigma_t$, and train a linear classifier on the full training representations at the same $\sigma_t$. \Cref{fig:acc_icr} plots $\mathrm{ICR}$ and test accuracy as functions of $\sigma_t$.

Across all datasets we observe a striking alignment: the $\mathrm{ICR}$ curve is U-shaped and attains a clear minimum at an intermediate noise level, while classification accuracy peaks in exactly the same range. We refer to this range as a \emph{semantic window}: at very low noise, representations are too tied to fine-grained, augmentation-specific details; at very high noise, representations collapse toward noise; in between, relative invariance is strongest and the model uses its representation space in a way that is most useful for downstream tasks. Importantly, $\mathrm{ICR}$ is computed in a label-free way from training representations alone, yet it accurately predicts which noise scales will deliver the best linear probe performance. We further verify the robustness of this observation in the data-limited setting in \Cref{app:icr_acc_data_limit} and observe a consistent trend. 

These observations are consistent with prior work~\citep{wang2023diffusion,li2025understanding} showing that diffusion sampling exhibits a coarse-to-fine transition, with different noise levels capturing structure at different granularities. They also suggest that our augmentation-based evaluation developed from self-supervised principles extend naturally to diffusion models, providing a simple bridge between these two frameworks.

Motivated by this semantic window, in the next section we fix a representative intermediate noise level $\sigma^\star$ near the $\mathrm{ICR}$ minimum and study how the representation space evolves over the course of training at this scale, relating the dynamics of invariance to generative quality and memorization.

%% file: sections/overfitting.tex
\begin{figure*}[t]
    \begin{center}
        \includegraphics[width = 0.88\textwidth]{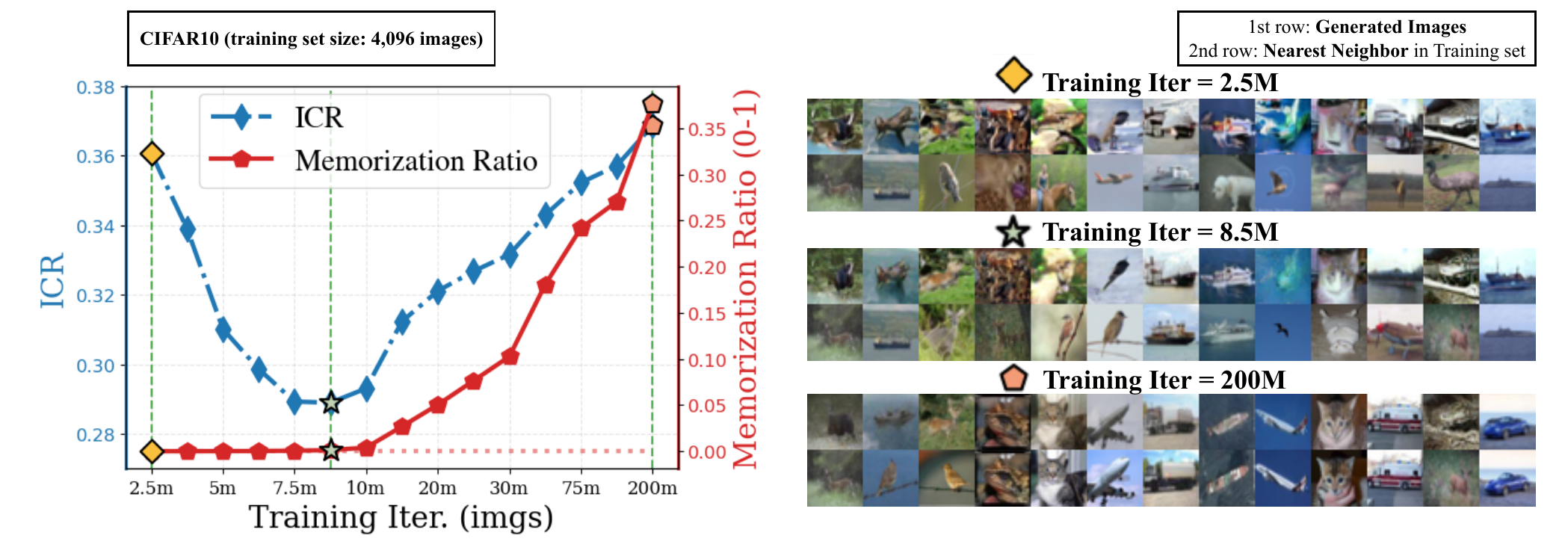}
    \end{center}
    \vspace{-0.1in}
    \caption{\textbf{ICR as an early signal of memorization in data limited diffusion training (CIFAR10).} We evaluate an EDM-based diffusion model trained on a subset of CIFAR10 (4096 images). \textbf{Left:} $\mathrm{ICR}$ {\color{tabblue} (blue)} follows a clear U shaped trajectory as training progresses, while the memorization ratio {\color{tabred} (red)} remains near zero early on and begins to rise only after the $\mathrm{ICR}$ minimum. \textbf{Right:} Qualitative inspection at 2.5M, 8.5M, and 200M training images seen. Generated samples (top) and their nearest training neighbors (bottom) show that visual quality initially improves, but eventually the model begins to memorize individual training images, in line with the $\mathrm{ICR}$ curve.}

    \label{fig:c10_icr_memscore}
\end{figure*}

\begin{figure*}[t]
    \begin{center}
    \begin{subfigure}{0.43\textwidth}
        \includegraphics[width = 0.955\textwidth]{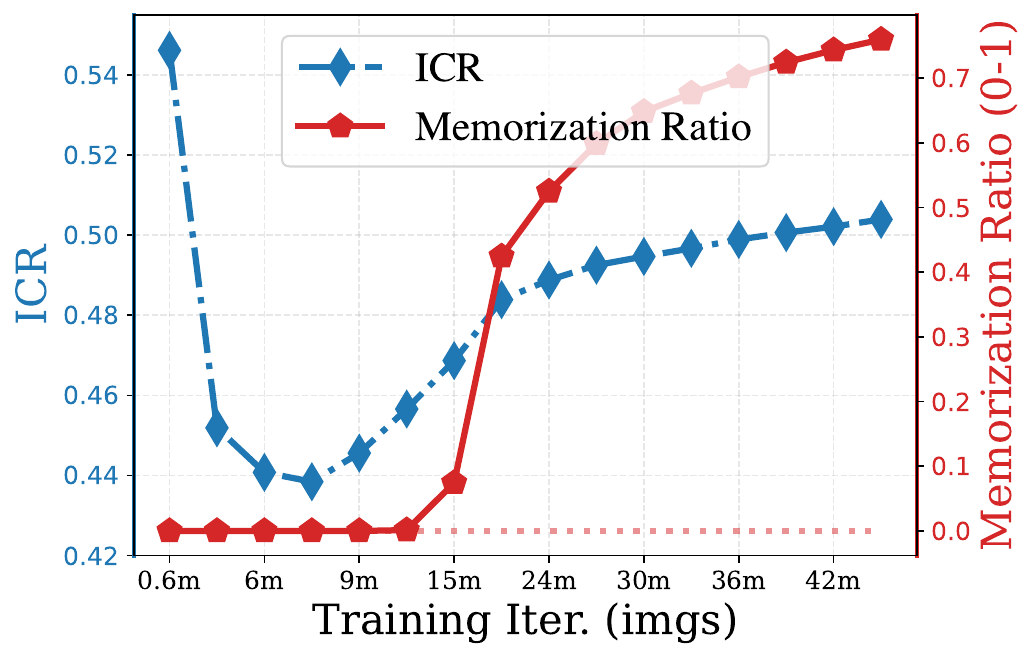}
        \caption{ImageNet-64, EDM, $N = 10\text{K}$} 
    \end{subfigure} \quad
    \begin{subfigure}{0.43\textwidth}
        \includegraphics[width = 0.955\textwidth]{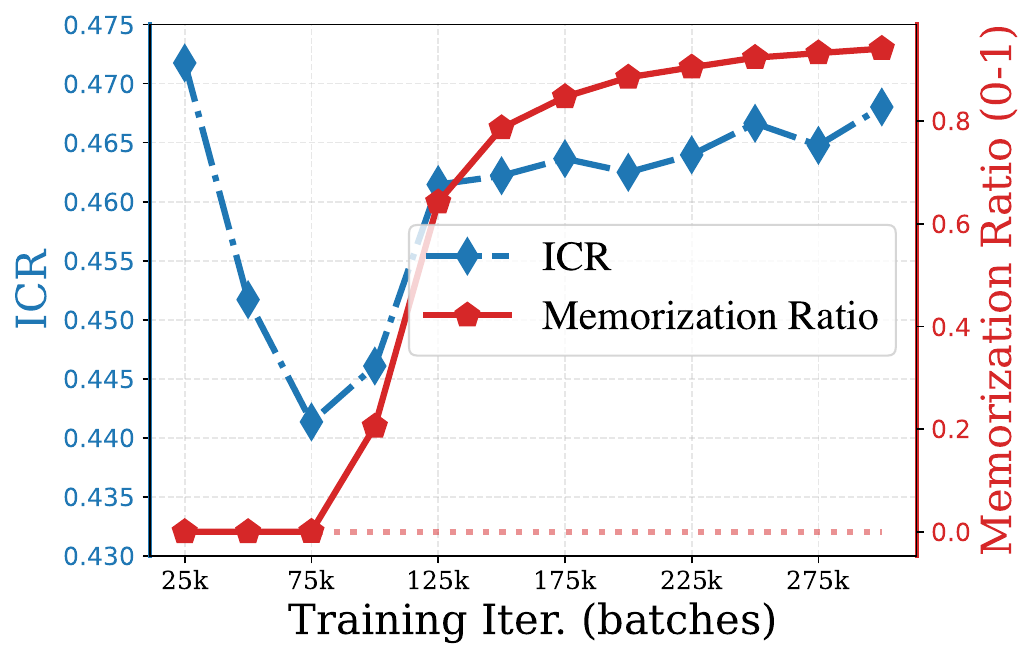}
        \caption{ImageNet-256, SiT-B/2, $N = 20\text{K}$} 
    \end{subfigure}
    \end{center}
    \vspace{-0.1in}
    \caption{\textbf{ICR dynamics consistently anticipate memorization across large-scale datasets.} We repeat the analysis of $\mathrm{ICR}$ {\color{tabblue} (blue)} and memorization ratio {\color{tabred} (red)} on ImageNet in data limited settings. 
    \textbf{(a)} EDM trained on a $10\text{K}$ image subset of ImageNet $64\times64$. \textbf{(b)} SiT-B/2 diffusion model trained on a $20\text{K}$ image subset of ImageNet $256\times256$. 
    In both cases, $\mathrm{ICR}$ dips and then rises before the memorization ratio increases, mirroring the behavior on CIFAR10 experiments (\Cref{fig:c10_icr_memscore}).}
    \vspace{-0.05in}
    \label{fig:more_icr_memscore}
\end{figure*}

\begin{figure*}[t]
    \begin{center}
        \includegraphics[width = 0.98\textwidth]{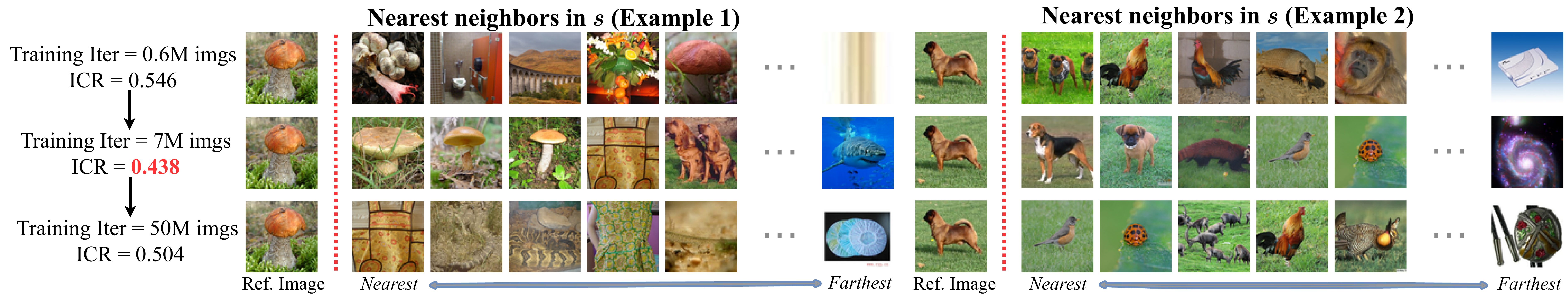}
    \end{center}
    \vspace{-0.1in}
    \caption{\textbf{Nearest neighbors of invariant components throughout limited data training.} We visualize nearest neighbors in $\bm s$ as in \Cref{fig:semantic-nuisance-decomp}. The neighbors qualitatively track $\mathrm{ICR}$ and the model’s generalization: in the first row (early training, near initialization), $\mathrm{ICR}$ is large and neighbors are not semantically meaningful; in the second row (intermediate training), $\mathrm{ICR}$ is smallest and neighbors are reasonable; in the third row (severe overfitting), $\mathrm{ICR}$ increases again and neighbor quality degrades.}
    \label{fig:nn_in_limited_data}
\end{figure*}

\begin{figure*}[t]
    \begin{center}
    \begin{subfigure}{0.46\textwidth}
        \includegraphics[width = 0.9\textwidth]{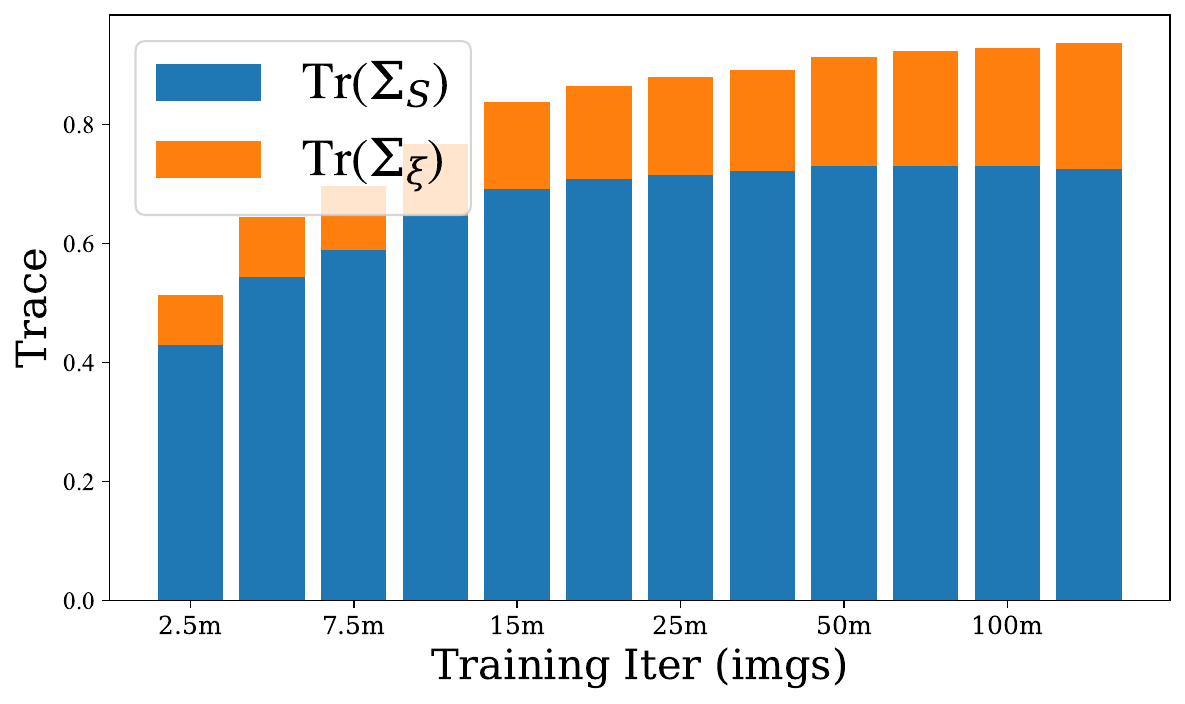}
        \vspace{-0.05in}
        \caption{CIFAR10, EDM, $N = 4096$} 
    \end{subfigure} \quad
    \begin{subfigure}{0.46\textwidth}
        \includegraphics[width = 0.9\textwidth]{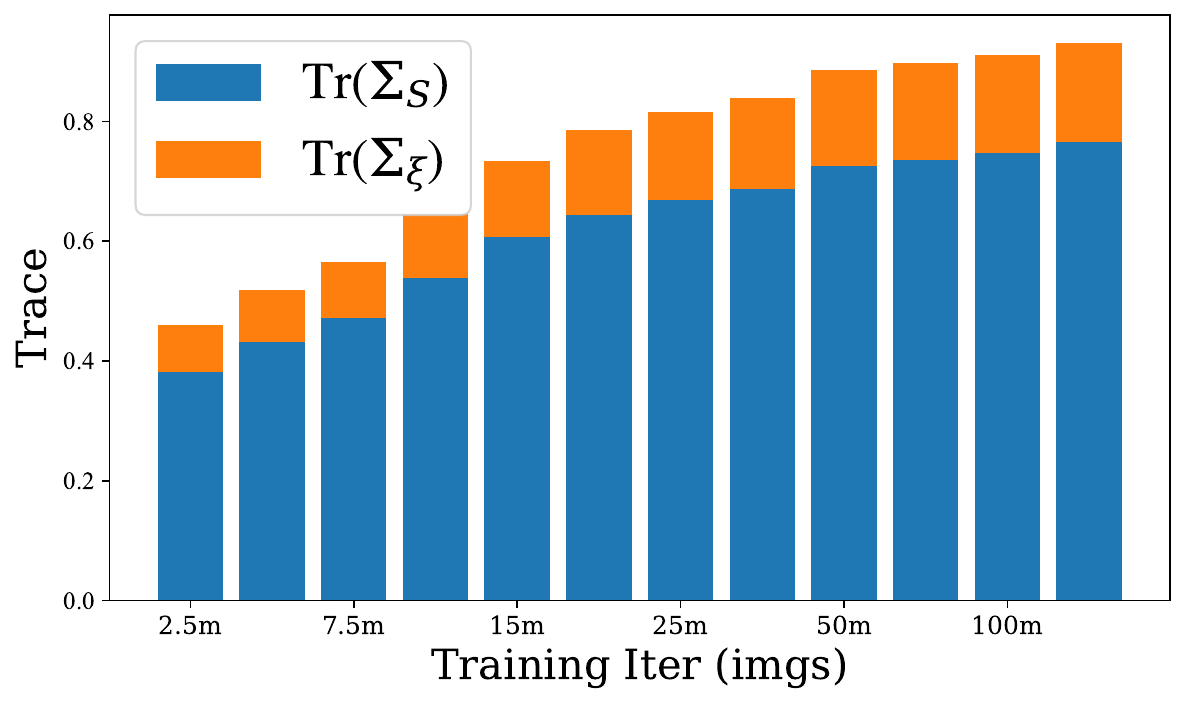}
        \vspace{-0.05in}
        \caption{CIFAR10, EDM, $N = 50\text{K}$} 
    \end{subfigure}
    \end{center}
    \vspace{-0.1in}
    \caption{\textbf{How feature expansion differs in data-limited and data-rich diffusion training.} 
    We train two EDM-based diffusion models on CIFAR10 with different training set sizes and track the traces of the invariant and residual covariances over training (as labeled). }
    \vspace{-0.1in}
    \label{fig:trace_progress}
\end{figure*}


In the previous section we examined how $\mathrm{ICR}$ varies across the diffusion noise schedule and identified an intermediate \emph{semantic window} where downstream representations are strongest. We now fix a representative noise scale $\sigma^\star$ in this window and track how the representation space evolves over the course of training.

Our analysis proceeds in three steps. First, in the data-rich regime, we show that $\mathrm{ICR}$ closely tracks FID during training, linking improvements in generative quality to changes in the internal feature geometry. Second, in the data-limited regime, we observe an early learning phenomenon in the representation space: $\mathrm{ICR}$ exhibits a clear U-shaped trajectory and its minimum aligns with the onset of memorization. Finally, we examine trace-level statistics of the invariant and residual covariances to understand how feature expansion is allocated between invariant and residual components in the two regimes.

\vspace{-0.1in}
\subsection{$\mathrm{ICR}$ Tracks FID in the Data-Rich Regime}
We first consider the data-rich regime, where the diffusion model is trained on the full training set. At each checkpoint we compute $\mathrm{ICR}$ from training features and evaluate the Fréchet Inception Distance (FID) between generated samples and the real data distribution. \Cref{fig:icr_fid_rich} illustrates the resulting trajectories as functions of training progress.

In this setting, $\mathrm{ICR}$ and FID exhibit a strong positive correlation: both demonstrate a monotonically decreasing trend. This alignment reflects an intuitive connection between generative quality and representation geometry. FID \citep{heusel2017gans} (and related Fréchet-based metrics such as $\mathrm{FD}_\text{DINOv2}$) measures the distance between generated and real distributions in an external semantic feature space \citep{szegedy2015going, oquab2023dinov2}, quantifying how well the model captures the data manifold. In contrast, $\mathrm{ICR}$ quantifies the ``purity" of the internal diffusion representation by measuring the ratio of augmentation-sensitive residual energy to stable invariant signal.

The simultaneous improvement in FID and $\mathrm{ICR}$ suggests that the improvement in generative ability is directly reflected in the refinement of the representation space. As the model better approximates $p_{\mathrm{data}}$, its internal features shift from capturing transient, view-specific noise toward a stable, low-dimensional image structure. In our framework, this manifests as a higher signal-to-noise ratio in the Fisher directions, where the invariant component $\bm\Sigma_s$ increasingly dominates the residual variation $\bm\Sigma_\xi$.

\vspace{-0.1in}
\subsection{Early Learning in the Representation Space Under Limited Data}

In this subsection, we investigate representation dynamics in the limited data regime. In this setting, recent studies~\citep{li2023generalization, li2024understanding, baptista2025memorization, bonnaire2026diffusion} have demonstrated an early learning phenomenon: image generation quality first improves and the model generalizes well in an initial phase of training, before eventually deteriorating as the model begins to memorize.

To examine whether the representation space undergoes a similar early learning trajectory in data-limited settings, we track $\mathrm{ICR}$ and a memorization ratio~\citep{pizzi2022self,zhang2024emergence} across training. Concretely, we train an EDM~\citep{karras2022elucidating} model on a subset of CIFAR10 with $N = 4096$ images, an EDM on a $10\text{K}$ subset of ImageNet $64\times64$, and a SiT-B/2~\citep{ma2024sit} model on a $20\text{K}$ subset of ImageNet $256\times256$, and for each setting report $\mathrm{ICR}$ computed on training features together with the memorization ratio.\footnote{The memorization ratio is computed by generating 10K images, extracting features for generated and training images using an external encoder~\citep{pizzi2022self}, and, for each generated image, checking whether its maximum cosine similarity to any training image exceeds $0.6$. Such samples are counted as memorized.}  

In these experiments, as reported in \Cref{fig:c10_icr_memscore,fig:more_icr_memscore}, $\mathrm{ICR}$ follows a clear U-shaped curve, in sharp contrast to the data-rich case in \Cref{fig:icr_fid_rich}. This indicates that feature invariance in the limited data regime also exhibits an early learning pattern: it improves during the initial phase of training and then degrades as training continues.

We moreover observe that the memorization ratio remains essentially zero around the $\mathrm{ICR}$ minimum and begins to increase only afterward. This suggests that the onset of memorization is reflected directly in the representation space: beyond the point where $\mathrm{ICR}$ is minimized, the model increasingly fits sample-specific idiosyncrasies~\citep{zhang2025generalization} rather than shared, stable semantic structure.

This transition in the representation is also visible in \Cref{fig:nn_in_limited_data}, where we use $\bm s$ to find nearest neighbors across the training trajectory in the data-limited case. During the middle phase of training, when $\mathrm{ICR}$ is small, the nearest neighbors are semantically close to the query, whereas in the early under-trained phase and the late heavily memorizing phase, where $\mathrm{ICR}$ is larger, the retrieved neighbors are noticeably less meaningful.

Finally, we note that the $\mathrm{ICR}$ values reported here are computed purely from training features and do not require generation or an external evaluation network. Taken together with the previous subsection, this suggests that $\mathrm{ICR}$ can serve as a reliable ``generalized'' generation metric that is monitorable during training without sampling. In the data-rich regime, it closely tracks improvements in generative quality in a way that parallels FID. In the data limited regime, where prior work has pointed out that FID is not entirely trustworthy for detecting memorization~\citep{stein2023exposing}, $\mathrm{ICR}$ can act as a coarse, easy-to-monitor early stopping signal, complementary to standard generation-based metrics such as the memorization ratio \citep{pizzi2022self}.


\subsection{How Feature Expansion Differs in Data-Rich and Data-Limited Regimes}
\vspace{-0.05in}
In the previous subsections, we highlighted that the relative feature invariance, as measured by $\mathrm{ICR}$, follows very different trajectories in the data-rich and data-limited regimes. However, $\mathrm{ICR}$ is a \emph{relative} quantity: it summarizes how much residual variance contaminates invariant directions, but does not reveal how the \emph{total} representation energy evolves. In particular, when $\mathrm{ICR}$ increases in the late stage of training under limited data, it is not clear whether this reflects an overall shrinkage of the representation space, a reallocation of variance from invariant to residual components, or some combination of both. To disentangle these possibilities, we examine the traces of the invariant and residual covariances over training.

In \Cref{fig:trace_progress}, we train two EDM-based diffusion models with different training set sizes to compare a data-limited setting (4096 images) with a data-abundant setting (50K images). Across both settings, the total feature energy, measured by $\Tr(\bm\Sigma_s) + \Tr(\bm\Sigma_{\xi})$, increases steadily over training. The regimes differ, however, in how this growth is allocated between invariant and residual components. In the data-abundant case, $\Tr(\bm\Sigma_s)$ continues to increase throughout training, while $\Tr(\bm\Sigma_{\xi})$ grows only mildly, indicating that additional feature capacity is predominantly devoted to invariant structure. In the data limited case, $\Tr(\bm\Sigma_s)$ rises initially but then saturates, whereas $\Tr(\bm\Sigma_{\xi})$ continues to grow. This suggests that, once the limited semantic structure in the dataset has been largely extracted, further feature expansion is dominated by augmentation-sensitive residual variability. Consistent with this picture, $\mathrm{ICR}$ decreases with training in the data-abundant regime but exhibits a U-shaped trajectory under limited data, reflecting the late stage shift from invariant to residual energy.

%% file: sections/conclusion.tex
In this work we revisit diffusion models from a self-supervised representation learning perspective. We introduce an invariance–residual decomposition of diffusion representations and the Invariant Contamination Ratio ($\mathrm{ICR}$), a label-free metric that measures how much augmentation and noise-sensitive variation contaminates stable structure in the feature space. Using this framework, we demonstrate that diffusion noise levels admit a semantic window where $\mathrm{ICR}$ is minimized and downstream linear classification performance is maximized, providing a simple SSL-based way to identify the most informative denoising scales. Tracking $\mathrm{ICR}$ over training further reveals distinct learning phases: in data-rich regimes it decreases in tandem with improvements in generative quality, while in data-limited regimes, it exhibits an early learning pattern that anticipates the onset of memorization. These findings suggest that diffusion models can be monitored and evaluated through their own representation space, providing intrinsic training-time signals that complement conventional generation-based metrics.



%% file: sections/appendices/app_main.tex
\section{Related Work}

\paragraph{Diffusion-based representation learning.}
Many works treat a trained diffusion denoiser as a feature extractor and test its features on downstream tasks.
These features work well for image classification~\citep{xiang2023denoising,mukhopadhyay2023diffusion,deja2023learning}, segmentation~\citep{baranchuk2021label}, correspondence~\citep{zhang2023tale,tang2023emergent}, and image editing~\citep{shi2024dragdiffusion}, and recent surveys summarize this line of work~\citep{fuest2024diffusion}.
Some articles also use diffusion models to generate augmentations and improve robustness under covariate shift~\citep{sastry2023diffaug}.
Since representation quality often depends on the noise level used for feature extraction, several distillation and compression methods aim to reduce the need for expensive timestep search and to improve transfer~\citep{yang2023diffusion,li2023dreamteacher,stracke2024cleandift,luo2024diffusion}.
Other work changes the training objective or the network to better combine generation and representation learning, for example, by adding new information-based losses or by building an explicit autoencoding structure~\citep{mittal2023diffusion,wang2023infodiffusion,hudson2024soda,preechakul2022diffusion}.
\citep{han2025diffusionmodelslearnhidden} further studies whether diffusion models learn hidden dependencies among image features.
In contrast to methods that mainly aim to improve downstream transfer or generation, we study how diffusion features change across noise levels and across training, and we link this behavior to self-supervised principles.

\paragraph{Representation dynamics and links to self-supervised learning.}
Prior work~\citep{li2025understanding} study why diffusion representations often peak at an intermediate noise level and explain this unimodal behavior through a low-dimensional data model. They further show that this unimodal pattern disappears when diffusion models transition from generalization to memorization~\citep{li2025understanding}. More recently, Wang et al.~\citep{wang2026revisiting} connect diffusion models and self-supervised learning through a shared perturbation-kernel perspective and propose a spectral alignment objective that improves diffusion training. This SSL perspective is also reflected in recent diffusion training methods such as REPA, which align diffusion representations with embeddings from pretrained self-supervised encoders (e.g., DINOv2) to improve generation quality and training efficiency~\citep{oquab2023dinov2,yu2024representation,singh2025matters}.

Our work differs from these lines in its goal. Rather than modifying the training objective or introducing an external teacher encoder, we study diffusion representations through principles inspired by self-supervised learning. In particular, our work is closely related to a line of research that develops label-free metrics for predicting downstream representation quality and guiding model selection, including metrics based on covariance spectrum decay, effective rank, and Fisher-style covariance decompositions~\citep{agrawal2022alpha,garrido2023rankme,thilak2024lidar}. Similar to these works, we seek to evaluate representation quality directly from feature statistics without training downstream classifiers. However, unlike prior metrics designed primarily for representation selection in SSL, we leverage the invariant and residual decomposition induced by augmentations and diffusion noise to study diffusion-specific phenomena, including semantic windows, generation quality, memorization, and training dynamics.

\paragraph{Memorization and generalization in diffusion models.}
Our work contributes to the broad line of research aiming to understand memorization and generalization behaviors in diffusion models~\citep{kadkhodaie2023generalization, zhang2024emergence, kamb2024analytic, shi2026closer,zhang2025understanding}. A large body of work has studied memorization in diffusion models from the perspectives of model complexity and data quantity~\citep{wang2024diffusion, achilli2024losing, bonnaire2026diffusion, buchanan2026edge, wang2026two}, and has shown that memorization typically emerges after an initial generalization phase when diffusion models are trained with limited data~\citep{li2024understanding, wang2025analytical, bonnaire2026diffusion, favero2025bigger}. Other works seek to explain why diffusion models are able to recover the underlying score function from discrete empirical samples~\citep{niedoba2025towards, lukoianov2025locality}. The generation and generalization behaviors across the reverse diffusion process have also been studied in~\citep{biroli2024dynamical, sclocchi2025phase}. In parallel, Ambrogioni et al.~\citep{ambrogioni2024search} established an asymptotic equivalence between generative diffusion models and modern Hopfield networks (associative memory networks), and a subsequent work~\citep{pham2025memorization} leveraged this associative memory perspective to identify spurious samples that emerge as diffusion models transition from generalization to memorization.

\section{Additional Discussions \& Experiments}

\subsection{Component Dynamics Across the Noise Schedule}
In \Cref{sec:obs}, we show that there exists a strong correspondence between $\mathrm{ICR}$ and classification performance across noise levels. In this subsection, we also track the energy progression of the total variance of the invariant signal, $\Tr(\bm\Sigma_s)$, and the residual variation, $\Tr(\bm\Sigma_{\xi})$.

As shown in \Cref{fig:trace_noise}, $\Tr(\bm\Sigma_{\xi})$ grows monotonically with the noise level $\sigma_t$, reflecting the increased reconstructive uncertainty inherent in high-noise denoising. Conversely, $\Tr(\bm\Sigma_s)$ exhibits a unimodal trajectory, peaking at intermediate scales. This suggests a ``semantic window'' where the model's invariant subspace is most expansive. Notably, peak classification accuracy (as shown in \Cref{fig:acc_icr}) does not always correspond with the maximum of $\Tr(\bm\Sigma_s)$, but rather with the minimum of the $\mathrm{ICR}$ ratio. This indicates that representation utility is governed not by the absolute magnitude of the invariant signal, but by its strength relative to the contaminating residual variation.

\begin{figure*}[t]
    \begin{center}
    \begin{subfigure}{0.48\textwidth}
    \includegraphics[width = 0.955\textwidth]{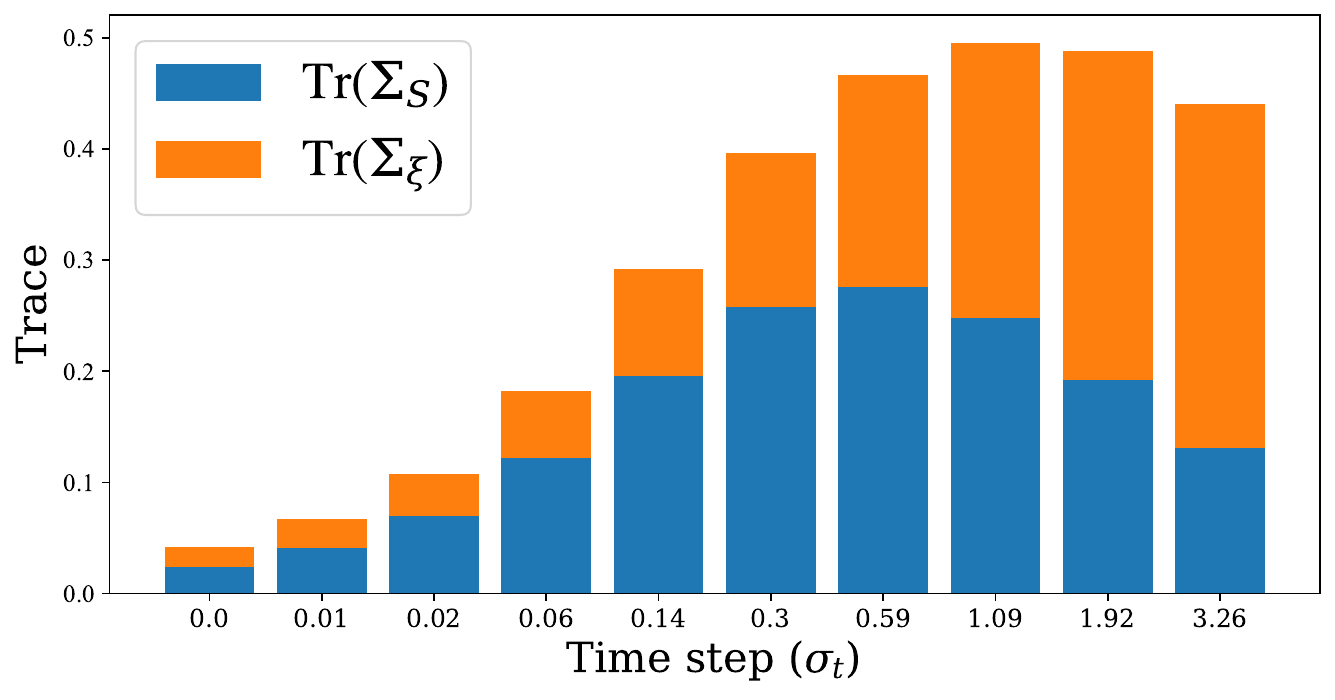}
    \caption{CIFAR10} 
    \end{subfigure} \quad 
    \begin{subfigure}{0.48\textwidth}
    \includegraphics[width = 0.955\textwidth]{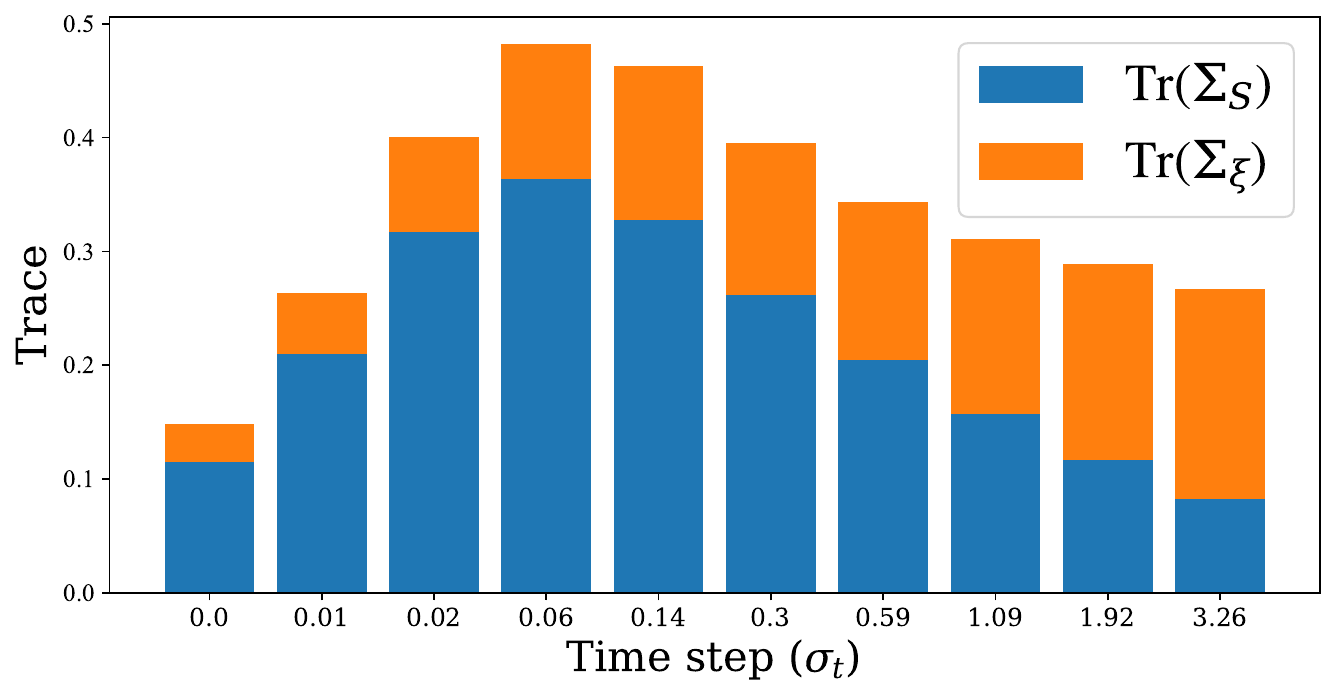}
    \caption{CIFAR100} 
    \end{subfigure}
    \end{center}
    \vspace{-0.1in}
\caption{\textbf{Invariant and residual energy across diffusion noise levels.} For pretrained EDM models on CIFAR10 and CIFAR100 in the data-rich regime, we plot the traces of the invariant and residual covariances, $\Tr(\bm\Sigma_s(\sigma_t))$ and $\Tr(\bm\Sigma_{\xi}(\sigma_t))$, as functions of the noise level $\sigma_t$. Invariant energy $\Tr(\bm\Sigma_s)$ increases from low noise, peaks at an intermediate scale, and then decreases in the high noise regime, whereas residual energy $\Tr(\bm\Sigma_{\xi})$ grows monotonically with $\sigma_t$.}
\vspace{-0.05in}
\label{fig:trace_noise}
\end{figure*}

\subsection{$\mathrm{ICR}$ and Classification Accuracy in the Data-limited Regime}\label{app:icr_acc_data_limit}

\begin{figure*}[t]
    \begin{center}
    \includegraphics[width = 0.46\textwidth]{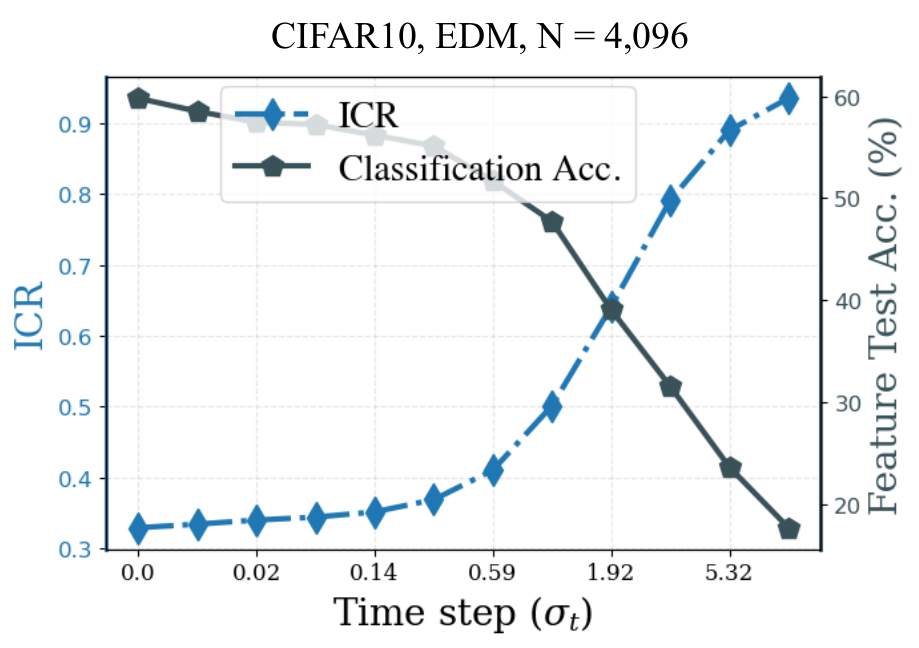}
    \end{center}
    \vspace{-0.1in}
\caption{\textbf{Correspondence between ICR and classification accuracy across noise levels in the data-limited setting.}  We study the behavior of $\mathrm{ICR}$ across noise levels in the data-limited setting under prolonged training. In this regime, classification accuracy no longer exhibits the unimodal trend observed in the generalization phase, and instead decreases monotonically as noise increases. In contrast, $\mathrm{ICR}$ increases monotonically and maintains a clear negative correlation with classification accuracy. This demonstrates that $\mathrm{ICR}$ continues to track representation quality even when classification accuracy no longer follows the typical generalization pattern.}

\vspace{-0.05in}
\label{fig:icr_acc_limit_data}
\end{figure*}

In \Cref{fig:acc_icr}, we show that $\mathrm{ICR}$ exhibits a strong negative correlation with classification accuracy across noise levels when diffusion models are trained using the full training set. Interestingly, the classification accuracy follows a unimodal trend as the noise level increases, a phenomenon previously observed and analyzed in \citep{xiang2023denoising, li2025understanding}. Moreover, Li et al.~\citep{li2025understanding} showed that this unimodal behavior is closely associated with the diffusion model correctly learning the underlying low-dimensional data distribution, and that it disappears once the model begins to memorize the training data, at which point the accuracy instead decreases monotonically with noise level.

To further test the robustness of the correlation between $\mathrm{ICR}$ and classification accuracy, we train an EDM model on a subset of 4,096 CIFAR10 images for a prolonged period until the model substantially overfits and memorizes the training data. We then evaluate both $\mathrm{ICR}$ and classification accuracy across noise levels at this checkpoint, and report the results in \Cref{fig:icr_acc_limit_data}.

As shown in the figure, classification accuracy no longer exhibits the unimodal trend observed during the generalization phase, and instead decreases monotonically as the noise level increases. Correspondingly, $\mathrm{ICR}$ exhibits a monotonic increasing trend across noise levels, further strengthening the correlation between $\mathrm{ICR}$ and representation quality even when the standard semantic window disappears due to memorization.

\subsection{Discussion on the Alignment and Uniformity Metrics \citep{wang2020understanding}}
\label{app:alignment_uniformity}

\begin{figure*}[t]
    \begin{center}
    \begin{subfigure}{0.48\textwidth}
    \includegraphics[width = 0.955\textwidth]{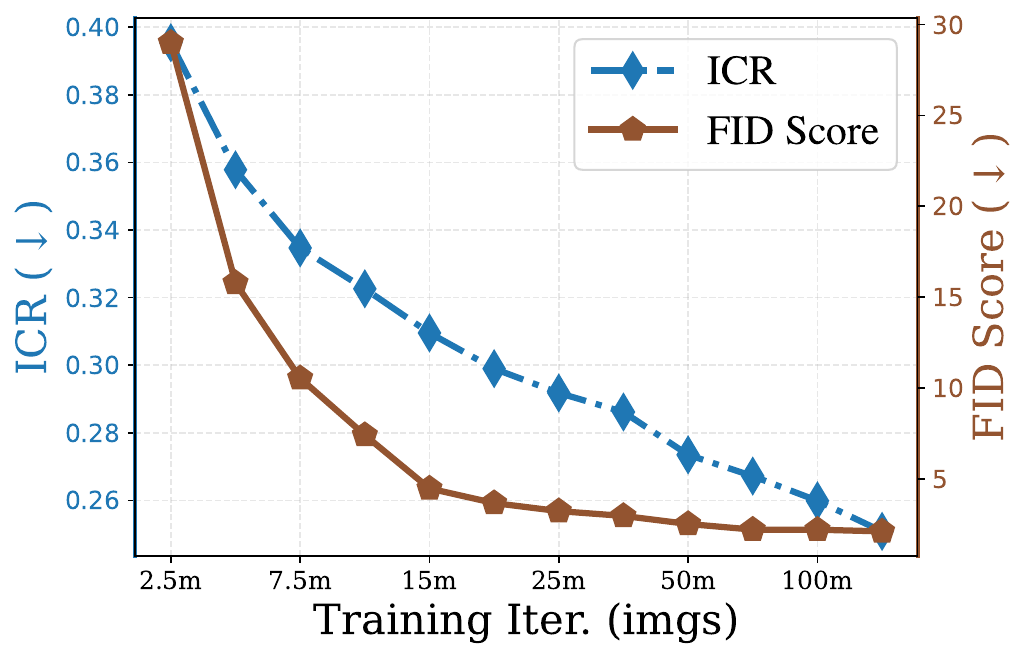}
    \caption{$\mathrm{ICR}$ and FID} 
    \end{subfigure} \quad 
    \begin{subfigure}{0.48\textwidth}
    \includegraphics[width = 0.955\textwidth]{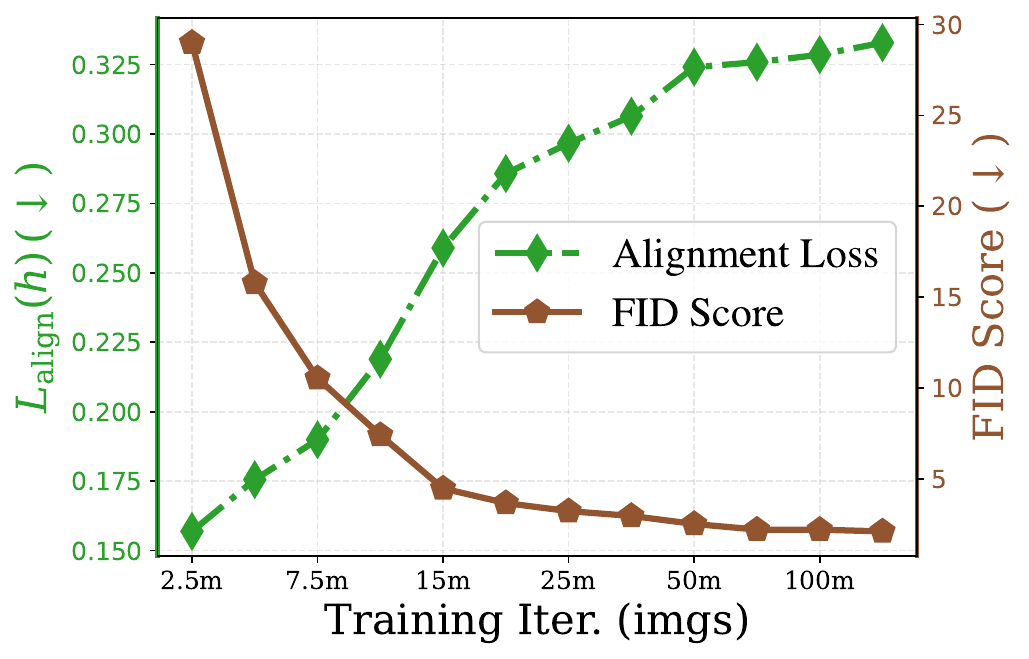}
    \caption{$\mathcal L_{\mathrm{align}}(\bm h)$ and FID} 
    \end{subfigure}
    \end{center}
    \vspace{-0.1in}
\caption{\textbf{Alignment versus $\mathrm{ICR}$ in data-rich diffusion training (CIFAR10, EDM).} We track FID together with $\mathrm{ICR}$ and the alignment loss $\mathcal L_{\mathrm{align}}$ over training on full CIFAR10. Both $\mathrm{ICR}$ {\color{tabblue}(blue)} and FID {\color{brown}(brown)} decrease monotonically, indicating improving representation invariance and generative quality, while $\mathcal L_{\mathrm{align}}$ {\color{tabgreen}(green)} increases despite being a lower is better metric.}

\vspace{-0.05in}
\label{fig:alignment_icr}
\end{figure*}

Wang et al.~\citep{wang2020understanding} introduced two feature-level criteria for contrastive SSL encoders.  In our notation, let $\bm h(a(\bm x)) \in \mathbb R^d$ denote the feature of an augmented image, and let $(\bm x,\bm x')$ be a positive pair obtained from two augmentations of the same image.  
The \emph{alignment} loss is
\begin{align}
    \mathcal L_{\mathrm{align}}(\bm h)
    \coloneqq 
    \E_{(\bm x,\bm x')} 
    \bigl[\lVert \bm h(a_1(\bm x)) - \bm h(a_2(\bm x)) \rVert_2^{\alpha}\bigr],
\end{align}
the expected squared distance between two views of the same sample.  
The \emph{uniformity} loss is
\begin{align}
    \mathcal L_{\mathrm{uniform}}(\bm h; t)
    \coloneqq 
    \log \E_{\bm x,\bm y}
    \Bigl[\exp\bigl(-t \lVert \bm h(a(\bm x)) - \bm h(a'(\bm y)) \rVert_2^{2}\bigr)\Bigr],
    \quad t>0,
\end{align}
which encourages features to be spread out on the unit hypersphere.  
In the contrastive setting, encoders that achieve low alignment together with good uniformity tend to have strong downstream classification accuracy.

Our focus is slightly different.  
We are interested in how the representation space evolves across diffusion noise levels and along the training trajectory, and in particular in a \emph{relative} notion of invariance that compares invariant structure to view specific variation while remaining stable under overall feature expansion.  In this regime, the alignment loss becomes less informative.  As shown in \Cref{fig:alignment_icr}, in the data-rich case the FID and $\mathrm{ICR}$ both decrease monotonically as training progresses, while $\mathcal L_{\mathrm{align}}$ continues to increase, suggesting worse alignment. This apparent disagreement is largely due to the fact that $\mathcal L_{\mathrm{align}}$ is an absolute squared distance: during diffusion training the overall feature variance grows (see \Cref{fig:trace_progress}), so alignment can increase even when the relative invariant structure is improving.

\subsection{Discussion on the Class Separation and Silhouette Score Metrics}

\begin{figure*}[t]
    \begin{center}
    \begin{subfigure}{0.315\textwidth}
        \includegraphics[width = 0.995\textwidth]{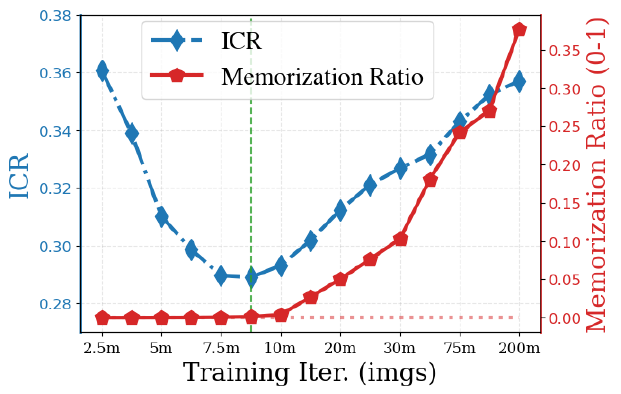}
        \caption{$\mathrm{ICR}$} 
    \end{subfigure} \quad
    \begin{subfigure}{0.315\textwidth}
        \includegraphics[width = 0.995\textwidth]{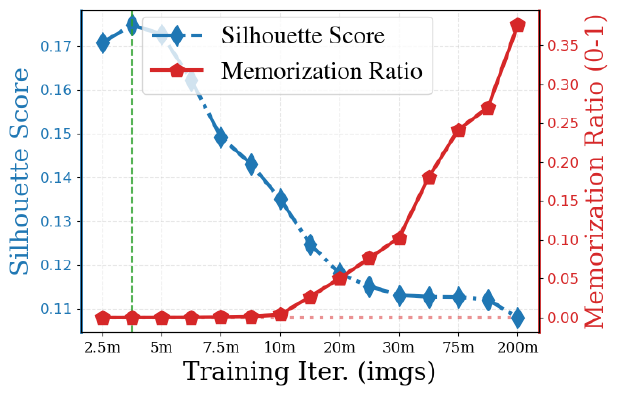}
        \caption{Silhouette score} 
    \end{subfigure}\quad
    \begin{subfigure}{0.315\textwidth}
        \includegraphics[width = 0.995\textwidth]{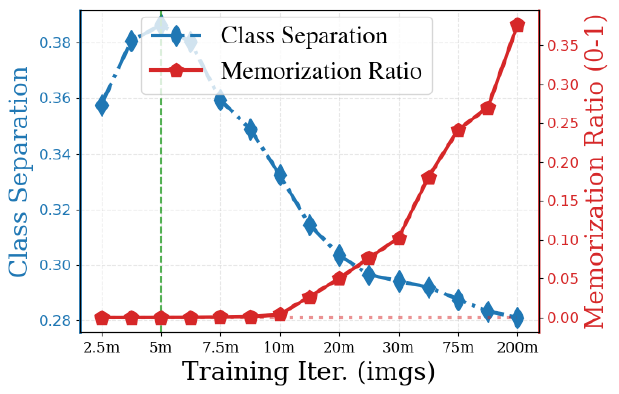}
        \caption{Class separation} 
    \end{subfigure}
    \end{center}
    \vspace{-0.1in}
    \caption{\textbf{$\mathrm{ICR}$, Silhouette score, and class separation in data-limited diffusion training (CIFAR10, EDM).} We revisit the experiment in Figure 4 (Training EDM on CIFAR10, with N=4,096 images) by incorporating two additional representation metrics: (i) the Silhouette Score, a partially unsupervised metric that relies on pseudo-labels (e.g., from k-means), and (ii) Class Separation, a supervised metric that depends on ground-truth labels. As shown in the figure, ICR exhibits a stronger and more consistent alignment with the memorization ratio, whereas the other two metrics saturate too early and fail to capture this trend as reliably.}
    \label{fig:icr_sil_class}
\end{figure*}

Besides the alignment and uniformity metrics discussed in \citep{wang2020understanding}, we additionally discuss two other representation metrics here: the class separation metric and the Silhouette score.

\textbf{Class separation} is a metric proposed in \citep{kornblith2021better}, which measures the within-class variation of representations relative to the overall variation.\footnote{The concept of maximizing separation between classes has recently been studied in improving diffusion model generation without classifier-free guidance~\citep{li2026mclr}.} Formally, based on cosine distances between normalized features, let $\bm h_{k,m}$ denote the representation of sample $m$ from class $k$, where $K$ is the number of classes and $N_k$ is the number of samples in class $k$. The class separation score $R^2$ is defined as

\begin{align*}
    R^2 =1-\frac{\bar d_{\mathrm{within}}}{\bar d_{\mathrm{total}}},
\end{align*}

where

\begin{align*}
    \bar d_{\mathrm{within}}&=\sum_{k=1}^{K}\sum_{m=1}^{N_k}\sum_{n=1}^{N_k}\frac{1 - \mathrm{sim}(\bm h_{k,m}, \bm h_{k,n})}{K N_k^2},
    \\
    \bar d_{\mathrm{total}}&=\sum_{j=1}^{K}\sum_{k=1}^{K}\sum_{m=1}^{N_j}\sum_{n=1}^{N_k}\frac{1 - \mathrm{sim}(\bm h_{j,m}, \bm h_{k,n})}{K^2 N_j N_k}.
\end{align*}

The metric was shown in \citep{kornblith2021better} to correlate with transferability of ImageNet pre-trained models on downstream classification tasks. However, as the name suggests, the metric requires ground-truth class labels and is therefore not suitable for unsupervised settings. In addition, being a global statistic, it does not explicitly capture directional structure in the representation space as $\mathrm{ICR}$ does.

\textbf{Silhouette score} is a clustering-based metric that evaluates how well a representation separates data into groups (or classes when supervised labels are available). Given a feature representation $\bm h_i$, define

\begin{itemize}
    \item $a(i)$: the average distance between $\bm h_i$ and samples within the same cluster,
    \item $b(i)$: the minimum average distance between $\bm h_i$ and samples in other clusters.
\end{itemize}

The Silhouette score is then defined as

\begin{align*}
    s(i)=\frac{b(i) - a(i)}{\max(a(i), b(i))},
\end{align*}

and the overall score is obtained by averaging $s(i)$ over all samples.

The Silhouette score measures the balance between intra-cluster compactness and inter-cluster separation, where larger values indicate better separation. However, this metric is not fully unsupervised, since it still requires cluster assignments, typically obtained through algorithms such as k-means, making it sensitive to hyperparameters such as the number of clusters. In addition, it relies on pairwise distances, which can become unstable in high-dimensional feature spaces, and similarly does not explicitly capture directional structure.

To compare these two metrics with $\mathrm{ICR}$, we redo the experiments in \Cref{fig:c10_icr_memscore} and report the results in \Cref{fig:icr_sil_class}. We observe that both the class separation score and the Silhouette score exhibit a unimodal trend during training, suggesting that they can partially capture the transition from learning the underlying data distribution to memorizing training samples. However, as shown in the figure, $\mathrm{ICR}$ exhibits a substantially stronger and more consistent alignment with the memorization ratio, whereas the other two metrics saturate much earlier and fail to reliably track the later-stage memorization behavior. We conjecture that this limitation is partly due to the global nature of these metrics.

\subsection{Connection to Neural Collapse}
\label{app:nc_connection}

In the main text we showed that the generalized eigenvalues admit a direct SNR interpretation: for each Fisher direction $\bm v_i$, the generalized Rayleigh quotient equals $\lambda_i$, and
\begin{align*}
    \mathrm{ICR} 
    = \frac{d}{d + \sum_{i=1}^d \lambda_i},
    \qquad
    \sum_{i=1}^d \lambda_i = \Tr\bigl(\bm \Sigma_{\xi}^{-1} \bm \Sigma_s\bigr).
\end{align*}
This follows by writing $\widetilde{\bm \Sigma}_s \coloneqq \bm \Sigma_{\xi}^{-1/2} \bm \Sigma_s \bm \Sigma_{\xi}^{-1/2}$ and $\bm u_i = \bm \Sigma_{\xi}^{1/2} \bm v_i$, so that $\bm u_i$ are the eigenvectors of $\widetilde{\bm \Sigma}_s$ with eigenvalues $\lambda_i$ and $\Tr(\bm \Sigma_{\xi}^{-1} \bm \Sigma_s) = \Tr(\widetilde{\bm \Sigma}_s) = \sum_i \lambda_i$.

The trace form $\Tr(\bm \Sigma_{\xi}^{-1} \bm \Sigma_s)$ could be linked to a familiar quantity in the Neural Collapse ($\mathcal{NC}$) literature~\citep{papyan2020prevalence,zhu2021geometric}. Neural Collapse refers to a phenomenon observed near the terminal phase of training in classification networks: penultimate features for each class collapse to a single mean vector, and these class means become maximally separated. A standard metric for quantifying this behavior is the $\mathcal{NC}_1$ score, defined for a $K$-class classifier as
\begin{align*}
    \mathcal{NC}_1 
    &= \frac{1}{K} \Tr\!\bigl(\bm \Sigma_{B}^{\dagger} \bm \Sigma_W\bigr),
\end{align*}
where
\begin{align*}
    \bm h_G 
    &= \frac{1}{nK} \sum_{k=1}^{K} \sum_{i=1}^{n} \bm h_{k,i},
    \qquad
    \bar{\bm h}_k 
    = \frac{1}{n} \sum_{i=1}^{n} \bm h_{k,i},
    \quad (1 \le k \le K), \\
    \bm\Sigma_W
    &\coloneqq
    \frac{1}{nK}
    \sum_{k=1}^{K} \sum_{i=1}^{n}
    \bigl(\bm h_{k,i} - \bar{\bm h}_k\bigr)
    \bigl(\bm h_{k,i} - \bar{\bm h}_k\bigr)^{\top},\qquad
    \bm\Sigma_B
    \coloneqq
    \frac{1}{K}
    \sum_{k=1}^{K}
    \bigl(\bar{\bm h}_k - \bm h_G\bigr)
    \bigl(\bar{\bm h}_k - \bm h_G\bigr)^{\top}.
\end{align*}
Here $\bm\Sigma_W$ and $\bm\Sigma_B$ denote the within-class and between-class covariance matrices of the penultimate features.

The design principle behind $\mathcal{NC}_1$ is closely related to ours: both metrics compare two covariance structures via a trace of a generalized eigenvalue type object, effectively measuring a signal-to-noise ratio in feature space. The key difference is that $\mathcal{NC}_1$ is label-based, contrasting between class and within-class variation, whereas $\mathrm{ICR}$ is built from a self-supervised principle that contrasts perturbation-invariant and perturbation-sensitive components without using labels.

Interestingly, several works have used $\mathcal{NC}_1$ and related quantities as metrics for assessing the transferability of pretrained discriminative models to downstream tasks~\citep{galanti2021role,wang2023far,li2024transfer,wang2025understanding}. Our results suggest that a similar trace-based viewpoint extends naturally to diffusion models, and that a unified representation-based evaluation principle may be possible that covers both discriminative and generative settings through appropriate choices of ``signal'' and ``noise'' covariances.

\subsection{Technical Details on Calculating $\mathrm{ICR}$}\label{app:icr_cal_detail}
In \Cref{subsec:framework}, we introduced the formal definition of the metric $\mathrm{ICR}$ and argue that it can be efficiently estimated using only two augmented views and a subset of training features\footnote{We use all generalized eigenvalues, rather than only the top ones, in order to summarize the invariant-to-residual ratio across the full representation space. Since the metric is directional, if the spectrum is sparse, it indicates that only a few directions carry strong invariant signal, while others are dominated by residual variation.}. In this subsection, we briefly discuss alternative formulations of the metric, dive into more detail on the estimation of it and the robustness of the estimation regards different number of samples used.

As noted in the main manuscript, the computation of $\mathrm{ICR}$ involves solving a generalized eigenvalue problem, one may argue that a simpler alternative is a trace-based statistic such as $\Tr(\bm \Sigma_{\xi}) / \Tr(\bm \Sigma_s)$, which aggregates all directions into a single global statistic. However, this aggregation loses directional information. For example, consider a representation where only a small number of directions carry strong invariant signal while the remaining directions are dominated by residual variation. In this case, the trace ratio averages over all directions and fails to reflect the presence of these highly informative directions. In contrast, the generalized eigenvalue formulation explicitly captures the signal-to-noise ratio along each direction and is therefore sensitive to such anisotropic structures.

\paragraph{Two-view approximation and empirical estimation.}
The conditional expectation over all augmentations and noise realizations is not available in practice. Following augmentation based self supervised learning, we approximate it using two independent views per image. For each $\bm x$, sample $a_1, a_2 \sim \mathcal A$ independently and set
\begin{align*}
    \bm h_1 = \bm h(a_1(\bm x_0)),\qquad 
    \bm h_2 = \bm h(a_2(\bm x_0)).
\end{align*}
Under the decomposition above, this can be written as
\begin{align*}
    \bm h_1 = \bm s(\bm x_0) + \bm \xi_1,\qquad
    \bm h_2 = \bm s(\bm x_0) + \bm \xi_2,
\end{align*}
where $\bm \xi_v = \bm \xi(a_v,\bm x_0)$ are zero mean residuals that are conditionally uncorrelated across views given $\bm x$ and share the same covariance $\bm\Sigma_\xi$.

We construct effective semantic and nuisance covariances from the sum and difference of the two views. Define
\begin{align*}
    \bm t &\coloneqq \tfrac{1}{2}(\bm h_1+\bm h_2),\qquad 
    \bm d \coloneqq \bm h_1-\bm h_2.
\end{align*}
A direct covariance calculation under the assumptions above yields
\begin{align*}
    \mathrm{Cov}(\bm d) &= 2\,\bm\Sigma_\xi, \qquad \mathrm{Cov}(\bm t) = \bm\Sigma_s + \tfrac{1}{2}\,\bm\Sigma_\xi,
\end{align*}
so that $\bm\Sigma_s$ and $\bm\Sigma_\xi$ can be recovered from the second moments of $\bm t$ and $\bm d$:
\begin{align*}
    \bm\Sigma_{\xi} = \tfrac{1}{2}\,\mathrm{Cov}(\bm d),\qquad
    \bm\Sigma_s     = \mathrm{Cov}(\bm t)-\tfrac{1}{4}\,\mathrm{Cov}(\bm d).
\end{align*}

In our experiments we estimate $\mathrm{Cov}(\bm t)$ and $\mathrm{Cov}(\bm d)$ from paired augmentations within each dataset and obtain empirical covariances $\widehat{\bm\Sigma}_s$ and $\widehat{\bm\Sigma}_{\xi}$ via the same formulas.

\paragraph{Estimating $\mathrm{ICR}$ from a subset of samples.}
In practice, $\mathrm{ICR}$ is computed from empirical covariances $\widehat{\bm\Sigma}_s$ and $\widehat{\bm\Sigma}_{\xi}$ estimated from a finite set of features, so the estimate may deviate from its population value depending on the number of samples used. Under standard covariance concentration results for subgaussian features, the estimation error of $\widehat{\bm\Sigma}_s$ and $\widehat{\bm\Sigma}_{\xi}$ scales on the order of $\sqrt{d/N}$ in operator norm, where $d$ is the feature dimension and $N$ is the number of samples. Since $\mathrm{ICR}$ depends on these covariances only through the trace term $\Tr(\bm\Sigma_{\xi}^{-1}\bm\Sigma_s)$, we expect its finite sample estimate to be relatively stable once $N$ is moderately larger than $d$.

To verify this empirically, we perform a sample complexity study on CIFAR10. Fixing a pretrained EDM model and a representative noise level, we compute $\mathrm{ICR}$ using random subsets of training images with sizes
\[
N \in \{16, 32, 64, 128, 256, 512, 1024, 2048, 4096,
                8192, 16384, 32768, 50000\},
\]
where $50{,}000$ is the full training set size. As shown in \Cref{fig:cov_est_error}, the estimated $\mathrm{ICR}$ stabilizes quickly: even with a few hundred to a few thousand samples, the $\mathrm{ICR}$ trends are already very close to the full dataset estimate. This robustness justifies our use of relatively small subsets of training features to monitor $\mathrm{ICR}$ throughout the experiments in the main text.

\begin{figure*}[t]
    \begin{center}
    \includegraphics[width = 0.48\textwidth]{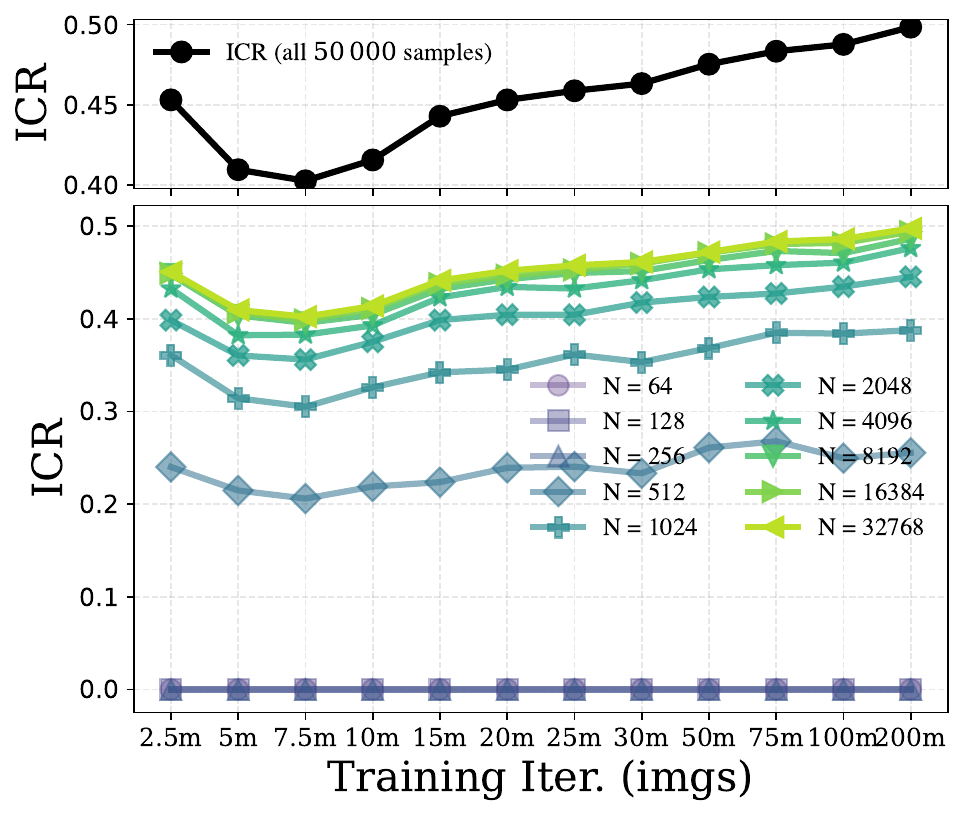}
    \end{center}
    \vspace{-0.1in}
\caption{\textbf{Stability of $\mathrm{ICR}$ estimates under subsampling.}  We evaluate $\mathrm{ICR}$ on CIFAR10 using a pretrained EDM model (4095 training samples) and a fixed noise level, varying the number of training samples used to estimate the covariances from $N = 16$ up to the full $50\text{K}$ images. As $N$ increases, the estimated $\mathrm{ICR}$ quickly showcase the similar trend close to the full data estimate.}

\vspace{-0.05in}
\label{fig:cov_est_error}
\end{figure*}

\paragraph{Sensitivity of $\mathrm{ICR}$ to augmentation design.}
\begin{figure*}[t]
    \begin{center}
    \begin{subfigure}{0.44\textwidth}
        \includegraphics[width = 0.955\textwidth]{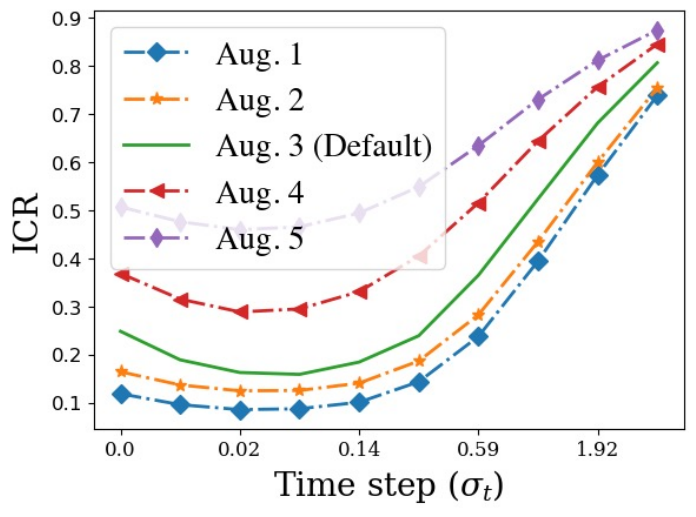}
        \caption{CIFAR10, EDM, N = 50K} 
    \end{subfigure} \quad
    \begin{subfigure}{0.51\textwidth}
        \includegraphics[width = 0.955\textwidth]{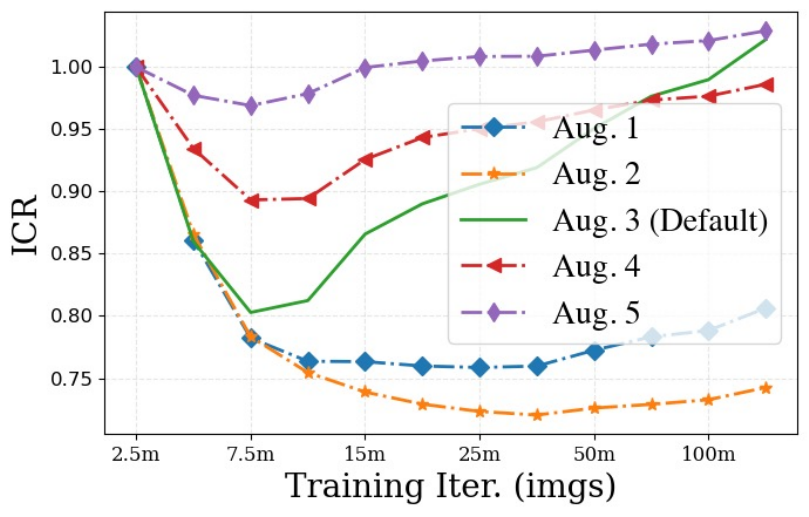}
        \caption{CIFAR10, EDM, N = 4,096} 
    \end{subfigure}
    \end{center}
    \vspace{-0.1in}
    \caption{\textbf{Sensitivity of $\mathrm{ICR}$ to augmentation design.} . We study the sensitivity of $\mathrm{ICR}$ to augmentation design by varying the strength of augmentations. We consider five augmentation levels and compute $\mathrm{ICR}$ under each setting. (a) In the data-abundant setting (50K training samples), the $\mathrm{ICR}$ curves across noise levels remain highly consistent across different augmentation strengths. (b) In the data-limited setting (4,096 training samples), the temporal evolution of $\mathrm{ICR}$ during training exhibits the same qualitative trend once sufficiently strong augmentations are applied. For clarity, we normalize each curve by its initial value to focus on the relative trend.}
    \vspace{-0.05in}
    \label{fig:robust_to_aug}
\end{figure*}

Since the computation of $\mathrm{ICR}$ relies on collecting representations under random augmentations, we further investigate its robustness to different augmentation strengths. Specifically, we consider five augmentation settings: Level 1 uses RandomCrop alone; level 2 adds horizontal flipping; level 3 further adds Color Jitter (the default setting used throughout the manuscript); level 4 additionally adds random rotation; and level 5 further adds CenterCrop. We denote these augmentation settings as Aug.1 through Aug.5.

Using these augmentation pipelines, we redo the experiments of $\mathrm{ICR}$ across noise levels in the data-rich setting and across training dynamics in the data-limited setting, and report the results in \Cref{fig:robust_to_aug}. As shown in the figure, once a reasonably rich augmentation pipeline is used (e.g., random crop + flip + color jitter and stronger variants), the resulting $\mathrm{ICR}$ trends remain highly consistent, indicating that our findings are robust to the specific choice of augmentations.

In contrast, when overly weak augmentations are used (e.g., random crop alone), the behavior becomes noticeably less stable. We conjecture that this is because the invariant-residual decomposition requires sufficiently diverse perturbations in order to meaningfully separate invariant and residual components.

\paragraph{Sensitivity of $\mathrm{ICR}$ to the choice of feature extraction layer.}  We evaluate the sensitivity of $\mathrm{ICR}$ to layer selection. Specifically, we extract features from multiple layers around the middle of the diffusion model and compute $\mathrm{ICR}$ for each choice. The results in \Cref{fig:layer_selection} show that ICR exhibits consistent behavior across layers in both data-abundant and data-limited settings. In particular, the overall trends and the location of the semantic window both remain stable regardless of layer choice.

\begin{figure*}[t]
    \begin{center}
    \begin{subfigure}{0.445\textwidth}
    \includegraphics[width = 0.955\textwidth]{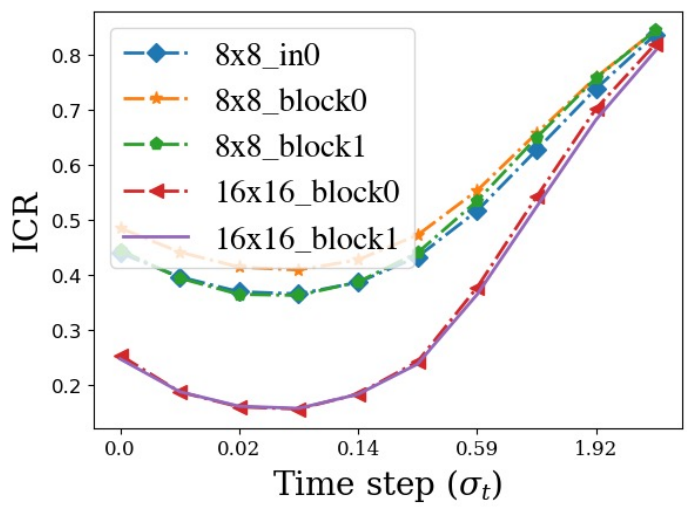}
    \caption{CIFAR10, EDM, N = 50K} 
    \end{subfigure} \quad 
    \begin{subfigure}{0.505\textwidth}
    \includegraphics[width = 0.955\textwidth]{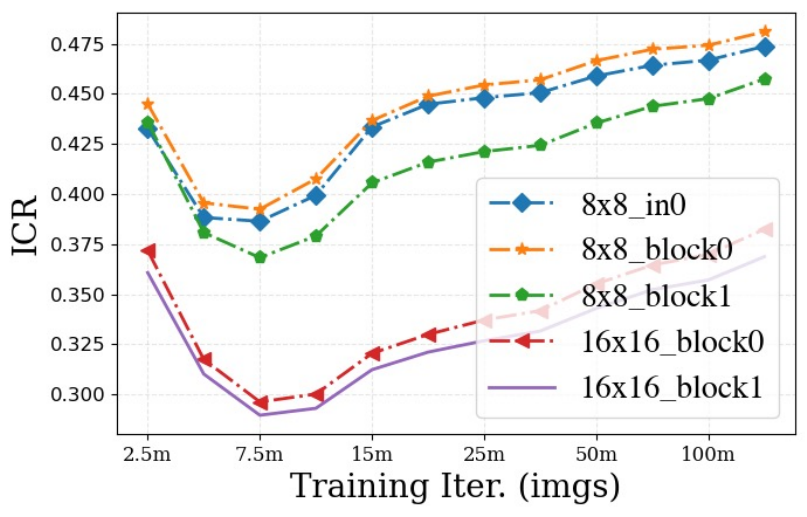}
    \caption{CIFAR10, EDM, N = 4,096} 
    \end{subfigure}
    \end{center}
    \vspace{-0.1in}
\caption{\textbf{Sensitivity of ICR to the choice of feature extraction layer.} We select multiple layers around the middle of the diffusion model and compute ICR using features from each layer.
(a) In the data-abundant setting (50K training samples), the trend of the $\mathrm{ICR}$ curves across noise levels remain highly consistent across different layer choices, with the location of the semantic window largely unchanged.
(b) In the data-limited setting (4,096 training samples), the temporal evolution of $\mathrm{ICR}$ during training exhibits the same qualitative trend across layers, including the U-shaped behavior associated with memorization.}
\vspace{-0.05in}
\label{fig:layer_selection}
\end{figure*}

\section{Experimental Details}\label{app:exp_detail}

Unless stated otherwise, we apply mean pooling to obtain a single feature vector per sample. For EDM models, we average over spatial dimensions, mapping tensors of shape $N \times C \times H \times W$ to $N \times C$. For transformer-based models, we average over the token dimension, mapping $N \times T \times D$ to $N \times D$.

In all experiments in \Cref{sec:overfitting}, we extract features at a fixed intermediate noise scale, using $\sigma_t = 0.29$ for EDM based models~\citep{karras2022elucidating} and $t = 0.2$ for SiT based models~\citep{ma2024sit}.

\paragraph{\Cref{fig:semantic-nuisance-decomp}.}
We take a 2{,}000-image subset of ImageNet64 and extract representations from the
\icode{dec.16x16\_block1} layer of a publicly available pretrained EDM model.
For each image, this gives a tensor of shape $(576,16,16)$; we also extract
representations for 14 augmented views (\icode{hflip}, \icode{shift$\pm$4x},
\icode{shift$\pm$4y}, \icode{cropc56}, \icode{croptl5}, \icode{cropbr54},
\icode{bright$\pm$0.08}, \icode{contrast0.85}, \icode{sat0.85}, \icode{cutout22},
\icode{blur3}), for a total of 15 views. We flatten each tensor into a
$147{,}456=576\times16\times16$-dimensional vector. For each image, we take the mean across the 15
view representations to obtain $\bm s$ (shared by the image and its augmented
views), and subtract $\bm s$ from each view representation to obtain the
nuisance component $\bm \xi$ for that view.

Finally, we pick a reference image (could be augmented) and retrieve its
nearest neighbors under cosine similarity, in terms of $\bm s$ or $\bm \xi$.
We then plot the retrieved neighbors (could be augmented) in the first and second rows, respectively.

\paragraph{\Cref{fig:acc_icr}.} 
We evaluate $\mathrm{ICR}$ and linear probing accuracy using publicly available EDM models on CIFAR10 and a SiT-XL/2 model on ImageNet $256\times256$, together with a CIFAR100 EDM model that we train ourselves. For EDM backbones we extract features from the \icode{dec.16x16\_block1} layer near the bottleneck; for SiT-XL/2 we use the output of transformer block 14, the midpoint of the 28-layer network. For EDM models we train a logistic regression classifier with \texttt{scikit-learn} on the full set of training features and report accuracy on test features. For SiT-XL/2, due to the larger feature set, we subsample $200\text{K}$ training features (200 images per class) and train a linear classifier with AdamW \citep{loshchilov2017decoupled} for 100 epochs (batch size $8192$, learning rate $10^{-2}$, weight decay $10^{-4}$), reporting test accuracy at the final epoch. These hyperparameters are fixed across all noise levels and are not tuned, since our goal is to capture trends rather than optimize absolute performance. For computing $\mathrm{ICR}$ we use a random subset of $N = 4096$ training features for EDM models and $N = 20\text{K}$ for SiT-XL/2 at each noise level, with the subset held fixed across noise levels in each experiment.

\paragraph{\Cref{fig:icr_fid_rich}.} 
We train an EDM model on CIFAR10 and a SiT-B/2 model on ImageNet $256\times256$ using the full training datasets and report FID together with $\mathrm{ICR}$. FID is computed from $50\text{K}$ generated images for each experiment. As above, $\mathrm{ICR}$ is estimated from a subset of training features, using $N = 4096$ samples for the EDM model and $N = 20\text{K}$ for the SiT-B/2 model.

\paragraph{\Cref{fig:c10_icr_memscore}.} We train an EDM model on CIFAR10 using $4096$ training images and report $\mathrm{ICR}$ together with the memorization ratio over the course of training. For the nearest neighbor visualizations on the right, we take generated samples and find their nearest neighbors among the training images directly in pixel space. For the snapshot at \textit{Training iter = 200M}, we slightly cherry pick a few generated samples that are clearly memorized to highlight this effect; all other visualizations use fixed random seeds aligned with this snapshot to keep the plots consistent across training iterations.

\paragraph{\Cref{fig:more_icr_memscore}.} We train an EDM model on ImageNet $64\times64$ using $10\text{K}$ training images (10 per class) and a SiT-B/2 model on ImageNet $256\times256$ using $20\text{K}$ training images (20 per class). Both models are trained with standard class conditional setups. When extracting features for $\mathrm{ICR}$ computation, to keep the procedure label-free, we encode all samples using the null class.

\paragraph{\Cref{fig:nn_in_limited_data}.}
We follow the same pipeline as in~\Cref{fig:semantic-nuisance-decomp}, but use checkpoints from an EDM model trained on only $10\text{K}$ ImageNet $64\times64$ samples. We visualize three typical phases of limited-data training: (i) early learning (first row), after 0.6M images, where the model is still improving; (ii) the onset of overfitting (second row), after 7M images, where ICR is smallest and the nearest neighbors remain meaningful and share structure with the reference; and (iii) severe overfitting (last row), after 50M images, where ICR increases and nearest neighbors are no longer semantic.

\paragraph{\Cref{fig:trace_progress}.} We reuse the EDM models trained on CIFAR10 with $4096$ images and with the full $50\text{K}$ images to report the traces of $\bm\Sigma_s$ and $\bm\Sigma_{\xi}$ over training. To ensure that our notion of feature expansion is not conflated with simple growth in representation norms, we $\ell_2$ normalize each representation before computing the covariances.

\section{Alignment between Optimal Test Loss and $\mathrm{ICR}$}\label{app:theoretic}
In the main part of the paper, we discussed that $\mathrm{ICR}$ can be used as an early-stopping indicator when training diffusion models with limited data. In this section, we provide some preliminary theoretical insights for the underlying cause of such functionality. Specifically, through a simple Gaussian toy model, we show that $\mathrm{ICR}$ moves in the same direction as the Bayes optimal linear denoising loss.

\paragraph{Toy two-layer linear model.}
Fix a noise level $\sigma_t$ and an encoder $\bm U \in \mathbb R^{d \times D}$. Let $\bm x_t \in \mathbb R^D$ denote the noisy input and
\begin{align*}
    \bm h \;=\; \bm U \bm x_t \in \mathbb R^d
\end{align*}
be the feature. 
As in \Cref{sec:framework}, we consider the invariance-variance decomposition of the feature $\bm h \;=\; \bm s + \bm\xi$ where $\bm s$ is the invariant component and $\bm\xi$ is the variant (residual) component induced by data augmentations and additive Gaussian noise. 
We denote their covariances by
\begin{align*}
    \bm\Sigma_s(\bm U) \;=\; \operatorname{Cov}(\bm s),
    \qquad
    \bm\Sigma_\xi(\bm U) \;=\; \operatorname{Cov}(\bm\xi),
\end{align*}

We reconstruct the clean image from the feature using a linear decoder 
$\bm W \in \mathbb R^{D \times d}$,
\begin{align*}
    \widehat{\bm x}_0 \;=\; \bm W \bm h,
\end{align*}
and define the population linear denoising loss $\mathcal L_{\mathrm{lin}}(\bm W; \bm U) \;\coloneqq\; \mathbb E\bigl\|\bm x_0 - \bm W \bm h\bigr\|^2.$ and thus
\begin{align}
    \mathcal L_{\mathrm{lin}}^\star(\bm U)
    \;\coloneqq\;
    \min_{\bm W}\, \mathcal L_{\mathrm{lin}}(\bm W; \bm U)
\end{align}
for the optimal linear denoising loss. For analytical clarity, we assume that $(\bm x_0,\bm h)$ are jointly Gaussian. 

\begin{proposition}[Monotonicity of Bayes optimal loss and ICR]
\label{prop:loss-ICR-monotone}
Let $\bm U_1,\bm U_2$ be two encoders at the same noise level $\sigma_t$, and denote their invariance and variance covariances by
\begin{align*}
    \bigl(\bm\Sigma_s(\bm U_k),\bm\Sigma_\xi(\bm U_k)\bigr),
    \qquad k \in \{1,2\}.
\end{align*}
Assume the Gaussian model above holds for each encoder.

\medskip
\noindent\textbf{(More variant energy hurts).}
If the invariant covariance is the same $\bm\Sigma_s(\bm U_1) \;=\; \bm\Sigma_s(\bm U_2)$, and the variant covariance of $\bm U_2$ dominates that of $\bm U_1$ in PSD order,
\begin{align*}
    \bm\Sigma_\xi(\bm U_1)
    \;\preceq\;
    \bm\Sigma_\xi(\bm U_2),
\end{align*}
then the optimal linear denoising loss and ICR both increase:
\begin{align}
    \mathcal L_{\mathrm{lin}}^\star(\bm U_1)
    \;\le\;
    \mathcal L_{\mathrm{lin}}^\star(\bm U_2),
    \quad
    \mathrm{ICR}(\bm U_1)
    \;\le\;
    \mathrm{ICR}(\bm U_2).
\end{align}

\medskip
\noindent\textbf{(More invariant energy helps).}
If instead the variant covariance is the same $\bm\Sigma_\xi(\bm U_1) \;=\; \bm\Sigma_\xi(\bm U_2)$, and the invariant covariance of $\bm U_2$ dominates that of $\bm U_1$,
\begin{align*}
    \bm\Sigma_s(\bm U_1)
    \;\preceq\;
    \bm\Sigma_s(\bm U_2),
\end{align*}
then the optimal linear denoising loss and ICR both decrease:
\begin{align}
    \mathcal L_{\mathrm{lin}}^\star(\bm U_1)
    \;\ge\;
    \mathcal L_{\mathrm{lin}}^\star(\bm U_2),
    \quad
    \mathrm{ICR}(\bm U_1)
    \;\ge\;
    \mathrm{ICR}(\bm U_2).
\end{align}
\end{proposition}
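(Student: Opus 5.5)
The plan is to handle the two monotonicity claims separately. The $\mathrm{ICR}$ part is pure linear algebra; the loss part needs a closed form for the optimal linear decoder together with one structural reading of the Gaussian toy model.

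\emph{ICR.} By \Cref{app:nc_connection}, $\mathrm{ICR}=d/(d+\Tr(\bm\Sigma_\xi^{-1}\bm\Sigma_s))$, which is decreasing in $T\coloneqq\Tr(\bm\Sigma_\xi^{-1}\bm\Sigma_s)$.
\begin{itemize}
\item If $\bm\Sigma_\xi$ is fixed and $\bm\Sigma_s(\bm U_1)\preceq\bm\Sigma_s(\bm U_2)$, then $\bm\Sigma_\xi^{-1/2}\bm\Sigma_s\bm\Sigma_\xi^{-1/2}$ increases in Löwner order. Hence $T$ increases and $\mathrm{ICR}$ decreases.
\item If $\bm\Sigma_s$ is fixed and $\bm\Sigma_\xi(\bm U_1)\preceq\bm\Sigma_\xi(\bm U_2)$, then $\bm\Sigma_\xi^{-1}$ decreases, since inversion is operator-antitone on positive definite matrices. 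Write $T=\Tr(\bm\Sigma_s^{1/2}\bm\Sigma_\xi^{-1}\bm\Sigma_s^{1/2})$; this is monotone in $\bm\Sigma_\xi^{-1}$, so $T$ decreases and $\mathrm{ICR}$ increases.
\end{itemize}
The regularization $\bm\Sigma_\xi+\tau\bm I$ from the footnote keeps all inverses well defined.

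\emph{Loss formula.} In the jointly Gaussian setting, the normal equations give $\bm W^\star=\bm C\bm\Sigma_h^{-1}$ with $\bm C=\Cov(\bm x_0,\bm h)$. This yields
\begin{align*}
\mathcal L^\star_{\mathrm{lin}}(\bm U)=\Tr(\bm\Sigma_0)-\Tr\bigl(\bm C\,\bm\Sigma_h^{-1}\bm C^\top\bigr),\qquad \bm\Sigma_h=\bm\Sigma_s+\bm\Sigma_\xi .
\end{align*}
Next I would use $\E[\bm\xi\mid\bm x_0]=\bm 0$. Under joint Gaussianity this makes $\bm\xi$ uncorrelated with $(\bm x_0,\bm s)$, so $\bm C=\Cov(\bm x_0,\bm s)$: only the invariant part carries information about $\bm x_0$.

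\emph{Part 1 (more variant energy hurts).} I read "same invariant covariance" as saying the joint law of $(\bm x_0,\bm s)$ is unchanged, so $\bm C$ is unchanged. Then $\bm\Sigma_h$ increases in PSD order, $\bm\Sigma_h^{-1}$ decreases, $\Tr(\bm C\bm\Sigma_h^{-1}\bm C^\top)$ decreases, and the loss increases. This part is immediate.

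\emph{Part 2 (more invariant energy helps) and the main obstacle.} A PSD increase of $\bm\Sigma_s$ alone does not pin down $\bm C$. With $\bm C$ held fixed, enlarging $\bm\Sigma_s$ would in fact \emph{increase} the loss. So the toy model must tie $\bm C$ to $\bm s$. I would commit to the linear structure $\bm s=\bm G\bm x_0$, which is what a linear encoder applied to an averaged linear view gives. Then $\bm\Sigma_s=\bm G\bm\Sigma_0\bm G^\top$ and $\bm C=\bm\Sigma_0\bm G^\top$.

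Whitening by $\bm\Sigma_\xi^{-1/2}$ and setting $\bm K=\bm\Sigma_\xi^{-1/2}\bm G\bm\Sigma_0^{1/2}$, the push-through identity turns the loss into
\begin{align*}
\mathcal L^\star_{\mathrm{lin}}=\Tr\bigl(\bm\Sigma_0^{1/2}(\bm I+\bm K^\top\bm K)^{-1}\bm\Sigma_0^{1/2}\bigr).
\end{align*}
The hypothesis, however, controls $\bm K\bm K^\top=\bm\Sigma_\xi^{-1/2}\bm\Sigma_s\bm\Sigma_\xi^{-1/2}$, not $\bm K^\top\bm K$. Passing from one to the other is where I expect the real difficulty.

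I would try first to rewrite the loss directly in terms of $\bm K\bm K^\top$, via the polar decomposition of $\bm K$ and the identity $\bm K^\top(\bm I+\bm K\bm K^\top)^{-1}\bm K=\bm I-(\bm I+\bm K^\top\bm K)^{-1}$. This works cleanly when the right singular vectors of $\bm K$ are fixed across encoders, for instance when $\bm U_2$ differs from $\bm U_1$ only by a left factor acting on the signal subspace. It also works when $\bm\Sigma_0$ is isotropic, in which case the loss equals $D-d+\Tr\bigl((\bm I+\bm K\bm K^\top)^{-1}\bigr)$ and is therefore Löwner-antitone in $\bm\Sigma_\xi^{-1/2}\bm\Sigma_s\bm\Sigma_\xi^{-1/2}$. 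That would match the $\mathrm{ICR}$ argument exactly.

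If the general case resists, I would state Part 2 under one of these explicit structural conditions, which is in the spirit of a toy model.
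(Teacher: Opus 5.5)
Your ICR argument (both directions) and your Part 1 loss argument are correct. The step you could not close cannot be closed: the loss half of Part 2 is false at the stated generality, because $(\bm\Sigma_s,\bm\Sigma_\xi)$ does not determine $\mathcal L^\star_{\mathrm{lin}}$.

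\textbf{Counterexample.} Take $D=2$, $d=1$ and $\bm\Sigma_0=\mathrm{diag}(4,1)$. Let $\bm h=\bm U\bm x_t$ with $\bm x_t=\bm x_0+\bm\eta$, where $\bm\eta\sim\mathcal N(\bm 0,\mathrm{diag}(4,1))$ is independent of $\bm x_0$. Use $\bm U_1=(\tfrac12,0)$ and $\bm U_2=(0,1)$. Both encoders have $\bm\Sigma_s=\bm\Sigma_\xi=1$, hence equal ICR. Yet $\mathcal L^\star_{\mathrm{lin}}=\Tr(\bm\Sigma_0)-\|\bm\Sigma_0\bm U^\top\|^2/(\bm\Sigma_s+\bm\Sigma_\xi)$ equals $3$ for $\bm U_1$ and $9/2$ for $\bm U_2$.
\begin{itemize}
\item Taken in the order $(\bm U_1,\bm U_2)$, this pair violates Part 2.
\item Taken in the order $(\bm U_2,\bm U_1)$, it violates Part 1 as literally stated.
\item Changing $\bm\Sigma_0$ to $\mathrm{diag}(4,1.2)$ makes $\bm\Sigma_s(\bm U_2)=1.2>1$ strict while the loss ordering persists.
\end{itemize}
Your structure $\bm s=\bm G\bm x_0$ with $\bm\xi$ independent of $\bm x_0$ is not an extra commitment. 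It is automatic once $(\bm x_0,\bm h)$ is jointly Gaussian, so this is the general model.

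\textbf{What this means for your plan.} Holding $\Cov(\bm x_0,\bm s)$ fixed in Part 1 is necessary, not merely convenient. Restricting Part 2 to an explicit structural condition is the right resolution. The isotropic case is the one to state: with $\bm\Sigma_0=\bm I$ you get $\mathcal L^\star_{\mathrm{lin}}=D-d+\sum_i(1+\lambda_i)^{-1}$. This depends only on the Fisher eigenvalues, so both parts hold under the covariance-only hypotheses, exactly parallel to ICR.

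One slip: a left factor $\bm K\mapsto\bm L\bm K$ preserves the row space but generally rotates the right singular vectors. For square invertible $\bm K_1$, take $\bm L=\bm K_1\bm Z\bm K_1^{-1}$ with $\bm Z$ orthogonal. This gives $\bm K_2\bm K_2^\top=\bm K_1\bm K_1^\top$ but $\bm K_2^\top\bm K_2=\bm Z^\top\bm K_1^\top\bm K_1\bm Z$. So it is not an instance of your fixed-singular-vector condition, and with anisotropic $\bm\Sigma_0$ the losses differ.

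\textbf{Comparison with the paper.} The paper posits $\bm x_0=\bm A\bm s+\bm C\bm\xi$, with $\bm s,\bm\xi$ independent and $\bm A,\bm C$ fixed; its $\bm C$ is a coefficient matrix, not your cross-covariance. It derives the same Schur-complement formula and treats only the $\bm\Sigma_\xi$ direction. It computes $\delta\mathcal L^\star=\Tr(\Delta\bm\Sigma_\xi\,\bm M^\top\bm M)\ge 0$ with $\bm M=\bm C-\bm\Sigma_{xh}\bm\Sigma_{hh}^{-1}$, then integrates along the segment from $\bm\Sigma_{\xi,1}$ to $\bm\Sigma_{\xi,2}$.
\begin{itemize}
\item Part 2 is never addressed, and in that model it fails. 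Writing $\bm x_0=\bm C\bm h+(\bm A-\bm C)\bm s$ gives $\mathcal L^\star=\Tr\bigl((\bm A-\bm C)(\bm\Sigma_\xi-\bm\Sigma_\xi\bm\Sigma_h^{-1}\bm\Sigma_\xi)(\bm A-\bm C)^\top\bigr)$, which is nondecreasing in $\bm\Sigma_s$.
\item That model has $\Cov(\bm x_0,\bm\xi)=\bm C\bm\Sigma_\xi$, which is nonzero unless $\bm C\bm\Sigma_\xi=\bm 0$. This conflicts with $\E[\bm\xi\mid\bm x_0]=\bm 0$.
\end{itemize}
Your observation that only $\Cov(\bm x_0,\bm s)$ enters is the faithful one. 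It makes Part 1 a one-line consequence of antitonicity of inversion rather than a perturbation computation.

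For ICR, the paper shows monotonicity of $\Tr(\bm V^\top\bm\Sigma_\xi\bm V)/\Tr(\bm V^\top(\bm\Sigma_s+\bm\Sigma_\xi)\bm V)$ with $\bm V$ frozen along the path. This ratio equals ICR only when $\bm V$ is the $\bm\Sigma_\xi$-orthonormal Fisher basis, and that basis moves with $\bm\Sigma_\xi$. Your route through $\Tr(\bm\Sigma_\xi^{-1}\bm\Sigma_s)$ and operator monotonicity is what actually establishes both ICR directions for the defined metric.
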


\begin{proof}
    We first derive the Bayes optimal loss in the original feature basis. Recall that for any measurable $g:\mathbb R^d \to \mathbb R^D$,
    \begin{align*}
        \mathcal{L}(g; \bm{\Sigma}_s, \bm{\Sigma}_\xi)
        = \E\|\bm x_0 - g(\bm h)\|^2.
    \end{align*}
    Let $\bm h = \bm s + \bm \xi$ with $\bm s \sim \mathcal N(\bm 0,\bm\Sigma_s)$ and $\bm \xi \sim \mathcal N(\bm 0,\bm\Sigma_\xi)$ independent, and $\bm x_0 = \bm A \bm s + \bm C \bm \xi$ as in the statement.

    We now derive the covariance of $\bm x_0$ and $\bm h$:
    \begin{align*}
        \bm \Sigma_{hh} &\coloneqq \Cov(\bm h) = \Cov(\bm s) + \Cov(\bm \xi) = \bm\Sigma_s + \bm\Sigma_{\xi},\\
        \bm \Sigma_{xx} &\coloneqq \Cov(\bm x_0) = \Cov(\bm A \bm s + \bm C \bm \xi)
        = \bm A \bm \Sigma_s \bm A^{\top} + \bm C \bm \Sigma_{\xi} \bm C^{\top}, \\
        \bm \Sigma_{xh} &\coloneqq \Cov(\bm x_0, \bm h) = \Cov(\bm A\bm s + \bm C \bm \xi, \bm s + \bm \xi)
        = \bm A \bm\Sigma_s + \bm C \bm\Sigma_{\xi}.
    \end{align*}

    Now consider 
    \begin{align*}
        \bm x_0 - g(\bm h)
        &= \underbrace{\left(\bm x_0 - \E[\bm x_0 \mid \bm h]\right)}_{\bm c_1}
         + \underbrace{\left(\E[\bm x_0 \mid \bm h] - g(\bm h)\right)}_{\bm c_2} .
    \end{align*}
    With this notation, we have
    \begin{align*}
        \E\|\bm x_0 - g(\bm h)\|^2
        &= \E\left[(\bm c_1 + \bm c_2)^\top (\bm c_1 + \bm c_2)\right] \\
        &= \E\|\bm x_0 - \E[\bm x_0 \mid \bm h]\|^2
           + \E\|\E[\bm x_0 \mid \bm h] - g(\bm h)\|^2 \\
        &\quad + 2\,\E\left[(\bm x_0 - \E[\bm x_0 \mid \bm h])^\top
                            (\E[\bm x_0 \mid \bm h] - g(\bm h))\right].
    \end{align*}
    Then we can show the cross term vanishes:
    \begin{align*}
        \E[\bm c_1^\top \bm c_2]
        &= \E\left[ \E[\bm c_1^\top \bm c_2 \mid \bm h] \right]
         = \E\left[ \E\left[(\bm x_0 - \E[\bm x_0 \mid \bm h])^\top \bm c_2(\bm h) \mid \bm h\right] \right] \\
        &= \E\left[ \E[\bm x_0 - \E[\bm x_0 \mid \bm h] \mid \bm h]^\top \bm c_2(\bm h) \right] \\
        &= \E\left[ (\E[\bm x_0 \mid \bm h] - \E[\bm x_0 \mid \bm h])^\top \bm c_2(\bm h) \right] = 0,
    \end{align*}
    where in the first equality we use the law of total expectation and in the third equality we use the fact that $\bm c_2$ only depends on $\bm h$ and is thus constant inside the conditional expectation. Hence
    \begin{align*}
        \E\|\bm x_0 - g(\bm h)\|^2
        = \E\|\bm x_0 - \E[\bm x_0 \mid \bm h]\|^2
          + \E\|\E[\bm x_0 \mid \bm h] - g(\bm h)\|^2 .
    \end{align*}
    Therefore $\E\|\bm x_0 - g(\bm h)\|^2$ is minimized when $g(\bm h) = \E\left[\bm x_0 \mid \bm h \right]$ with minimum value $\E\|\bm x_0 - \E[\bm x_0 \mid \bm h]\|^2$.

    Now with $\bm x_0, \bm h$ jointly Gaussian, and $\bm \Sigma_{hh}$ invertible\footnote{In practice, we can enforce this by adding a small $\tau \bm I$ term to $\bm \Sigma_{\xi}$.}, we have
    \begin{align*}
        \E\left[\bm x_0 \mid \bm h \right] = \bm\Sigma_{xh}\bm\Sigma_{hh}^{-1} \bm h.
    \end{align*}
    Write $\bm W = \bm\Sigma_{xh}\bm\Sigma_{hh}^{-1}$, then
    \begin{align*}
        \mathcal{L}^{\star}\left(\bm\Sigma_s,\bm\Sigma_{\xi}\right)
        &= \E\|\bm x_0 - \E[\bm x_0 \mid \bm h]\|^2
         = \E\|\bm x_0 - \bm W \bm h\|^2 \\
        &= \E\left[ \left(\bm x_0 - \bm W \bm h \right)^{\top} \left(\bm x_0 - \bm W \bm h \right)\right] \\
        &= \Tr\left(\bm \Sigma_{xx}\right)
           - \Tr\left( \bm\Sigma_{xh}\bm\Sigma_{hh}^{-1}\bm\Sigma_{hx} \right).
    \end{align*}

    Now what's left to show are the two co-monotonic facts. Consider a directional perturbation $\Delta \bm\Sigma_{\xi} \succeq \bm 0$. Then
    \begin{align*}
        \delta \bm\Sigma_{xx} &= \bm C \Delta\bm\Sigma_{\xi} \bm C^{\top},\qquad
        \delta \bm\Sigma_{xh} = \bm C \Delta\bm\Sigma_{\xi},\qquad
        \delta \bm\Sigma_{hh} = \Delta\bm\Sigma_{\xi}.
    \end{align*}
    Using $\delta(\bm\Sigma_{hh}^{-1}) = -\bm\Sigma_{hh}^{-1}(\delta\bm\Sigma_{hh})\bm\Sigma_{hh}^{-1}$ and differentiating
    \begin{align*}
        \mathcal L^{\star} = \Tr(\bm\Sigma_{xx}) - \Tr(\bm\Sigma_{xh}\bm\Sigma_{hh}^{-1}\bm\Sigma_{hx}),
    \end{align*}
    
    We can then calculate
    \begin{align*}
        \delta \mathcal{L}^{\star}
        &= \Tr\left( \delta \bm\Sigma_{xx} \right)
           - \Tr\left( \delta \bm\Sigma_{xh}\,\bm\Sigma_{hh}^{-1}\bm\Sigma_{hx} \right)
           - \Tr\left( \bm\Sigma_{xh}\,\delta\left(\bm\Sigma_{hh}^{-1}\right)\bm\Sigma_{hx} \right)
           - \Tr\left( \bm\Sigma_{xh}\,\bm\Sigma_{hh}^{-1}\,\delta \bm\Sigma_{hx} \right) \\
        &= \Tr\left( \bm C \,\Delta\bm\Sigma_{\xi}\, \bm C^{\top} \right)
           - \Tr\left( \bm C \,\Delta\bm\Sigma_{\xi}\,\bm\Sigma_{hh}^{-1}\bm\Sigma_{hx} \right) \\
        &\quad - \Tr\left( \bm\Sigma_{xh}\left( -\bm\Sigma_{hh}^{-1}\Delta\bm\Sigma_{\xi}\bm\Sigma_{hh}^{-1} \right)\bm\Sigma_{hx} \right)
           - \Tr\left( \bm\Sigma_{xh}\,\bm\Sigma_{hh}^{-1}\Delta\bm\Sigma_{\xi} \bm C^{\top} \right) \\
        &= \Tr\left( \bm C \,\Delta\bm\Sigma_{\xi}\, \bm C^{\top} \right)
           - \Tr\left( \bm C \,\Delta\bm\Sigma_{\xi}\,\bm\Sigma_{hh}^{-1}\bm\Sigma_{hx} \right) \\
        &\quad + \Tr\left( \bm\Sigma_{xh}\bm\Sigma_{hh}^{-1}\Delta\bm\Sigma_{\xi}\bm\Sigma_{hh}^{-1}\bm\Sigma_{hx} \right)
           - \Tr\left( \bm\Sigma_{xh}\,\bm\Sigma_{hh}^{-1}\Delta\bm\Sigma_{\xi} \bm C^{\top} \right).
    \end{align*}
    Using cyclicity of the trace to move $\Delta\bm\Sigma_{\xi}$ to the left in each term,
    \begin{align*}
        \delta \mathcal L^{\star}
        &= \Tr\left( \Delta\bm\Sigma_{\xi}\, \bm C^{\top}\bm C \right)
           - \Tr\left( \Delta\bm\Sigma_{\xi}\, \bm\Sigma_{hh}^{-1}\bm\Sigma_{hx}\bm C \right) \\
        &\quad + \Tr\left( \Delta\bm\Sigma_{\xi}\, \bm\Sigma_{hh}^{-1}\bm\Sigma_{hx}\bm\Sigma_{xh}\bm\Sigma_{hh}^{-1} \right)
           - \Tr\left( \Delta\bm\Sigma_{\xi}\, \bm C^{\top}\bm\Sigma_{xh}\bm\Sigma_{hh}^{-1} \right) \\
        &= \Tr\left( \Delta\bm\Sigma_{\xi} \left[ \bm C^{\top}\bm C
                - \bm\Sigma_{hh}^{-1}\bm\Sigma_{hx}\bm C
                + \bm\Sigma_{hh}^{-1}\bm\Sigma_{hx}\bm\Sigma_{xh}\bm\Sigma_{hh}^{-1}
                - \bm C^{\top}\bm\Sigma_{xh}\bm\Sigma_{hh}^{-1} \right] \right).
    \end{align*}
    Now define
    \begin{align*}
        \bm M \coloneqq \bm C - \bm\Sigma_{xh}\bm\Sigma_{hh}^{-1}.
    \end{align*}
    Then
    \begin{align*}
        \bm M^{\top}\bm M
        &= \left( \bm C - \bm\Sigma_{xh}\bm\Sigma_{hh}^{-1} \right)^{\top}
           \left( \bm C - \bm\Sigma_{xh}\bm\Sigma_{hh}^{-1} \right) \\
        &= \left( \bm C^{\top} - \bm\Sigma_{hh}^{-1}\bm\Sigma_{hx} \right)
           \left( \bm C - \bm\Sigma_{xh}\bm\Sigma_{hh}^{-1} \right) \\
        &= \bm C^{\top}\bm C
           - \bm C^{\top}\bm\Sigma_{xh}\bm\Sigma_{hh}^{-1}
           - \bm\Sigma_{hh}^{-1}\bm\Sigma_{hx}\bm C
           + \bm\Sigma_{hh}^{-1}\bm\Sigma_{hx}\bm\Sigma_{xh}\bm\Sigma_{hh}^{-1},
    \end{align*}
    which matches the bracket above. Hence
    \begin{align*}
        \delta \mathcal{L}^{\star}\left(\bm\Sigma_s,\bm\Sigma_{\xi}\right)
        = \Tr\left( \Delta\bm\Sigma_{\xi} \, \bm M^{\top} \bm M \right).
    \end{align*}

    Since $\Delta\bm\Sigma_{\xi} \succeq \bm 0$ and $\bm M^{\top}\bm M \succeq \bm 0$, the product inside the trace is positive semidefinite and hence
    \begin{align*}
        \delta \mathcal{L}^{\star}\left(\bm\Sigma_s,\bm\Sigma_{\xi}\right)
        = \Tr\left( \Delta\bm\Sigma_{\xi} \, \bm M^{\top} \bm M \right) \ge 0.
    \end{align*}
    Therefore, consider $\bm{\Sigma}_{\xi,1}, \bm{\Sigma}_{\xi,2} \succeq \bm 0$ with $\bm{\Sigma}_{\xi,1} \preceq \bm{\Sigma}_{\xi,2}$, and define
    \begin{align*}
        \bm{\Sigma}_{\xi}(t) \coloneqq \bm{\Sigma}_{\xi,1} + t \left(  \bm{\Sigma}_{\xi,2} -  \bm{\Sigma}_{\xi,1} \right),\quad t\in[0,1],
    \end{align*}
    and $\phi(t) = \mathcal{L}^{\star}\left(\bm\Sigma_s,\bm\Sigma_{\xi}(t)\right)$. Then
    \begin{align*}
        \phi'(t) = \Tr\left( \left( \bm{\Sigma}_{\xi,2} -  \bm{\Sigma}_{\xi,1} \right) \bm M^{\top} \bm M \right) \geq 0,
    \end{align*}
    i.e., $\phi(t)$ is non-decreasing in $t$ and therefore
    \begin{align*}
        \mathcal L^\star(\bm{\Sigma}_s,\bm{\Sigma}_{\xi,1})
        \leq \mathcal{L}^\star(\bm{\Sigma}_s,\bm{\Sigma}_{\xi,2}).
    \end{align*}

    Now we consider $\mathrm{ICR}$. Fix $\bm V$ as in the statement and define
    \begin{align*}
        \widetilde{\bm\Sigma}_s \coloneqq \bm V^{\top} \bm\Sigma_s \bm V,\qquad
        \widetilde{\bm\Sigma}_{\xi} \coloneqq \bm V^{\top} \bm\Sigma_{\xi} \bm V.
    \end{align*}
    By cyclicity of the trace,
    \begin{align*}
        \mathrm{ICR}(\bm\Sigma_s,\bm\Sigma_{\xi})
        = \frac{\Tr(\bm V^{\top} \bm\Sigma_{\xi} \bm V)}
               {\Tr(\bm V^{\top} (\bm\Sigma_s + \bm\Sigma_{\xi}) \bm V)}
        = \frac{\Tr(\widetilde{\bm\Sigma}_{\xi})}
               {\Tr(\widetilde{\bm\Sigma}_s + \widetilde{\bm\Sigma}_{\xi})}.
    \end{align*}
    Fix $\bm\Sigma_s$ and consider the same path $\bm\Sigma_{\xi}(t) = \bm{\Sigma}_{\xi,1} + t(\bm{\Sigma}_{\xi,2}-\bm{\Sigma}_{\xi,1})$, and set
    \begin{align*}
        \widetilde{\bm\Sigma}_{\xi}(t) &\coloneqq \bm V^{\top} \bm\Sigma_{\xi}(t) \bm V
        = \bm V^{\top}\bm{\Sigma}_{\xi,1}\bm V
          + t\,\bm V^{\top}(\bm{\Sigma}_{\xi,2}-\bm{\Sigma}_{\xi,1})\bm V, \\
        \alpha(t) &\coloneqq \Tr(\widetilde{\bm\Sigma}_{\xi}(t)),\qquad
        \beta \coloneqq \Tr(\widetilde{\bm\Sigma}_s) > 0.
    \end{align*}
    Then
    \begin{align*}
        \mathrm{ICR}(t)
        \coloneqq \mathrm{ICR}(\bm\Sigma_s,\bm\Sigma_{\xi}(t))
        = \frac{\alpha(t)}{\alpha(t)+\beta}.
    \end{align*}
    Differentiating,
    \begin{align*}
        \alpha'(t)
        &= \Tr\left( \bm V^{\top}(\bm{\Sigma}_{\xi,2}-\bm{\Sigma}_{\xi,1})\bm V \right)
         = \Tr\left( (\bm{\Sigma}_{\xi,2}-\bm{\Sigma}_{\xi,1})\, \bm V \bm V^{\top} \right) \geq 0,
    \end{align*}
    because $\bm{\Sigma}_{\xi,2}-\bm{\Sigma}_{\xi,1} \succeq \bm 0$ and $\bm V \bm V^{\top} \succ \bm 0$ imply
    \begin{align*}
        \Tr\left( (\bm{\Sigma}_{\xi,2}-\bm{\Sigma}_{\xi,1})\, \bm V \bm V^{\top} \right)
        = \Tr\left( (\bm V \bm V^{\top})^{1/2} (\bm{\Sigma}_{\xi,2}-\bm{\Sigma}_{\xi,1}) (\bm V \bm V^{\top})^{1/2} \right) \geq 0.
    \end{align*}
    Hence
    \begin{align*}
        \mathrm{ICR}'(t)
        = \frac{\beta\,\alpha'(t)}{(\alpha(t)+\beta)^2} \ge 0,
    \end{align*}
    so $\mathrm{ICR}(t)$ is non-decreasing in $t$ and we obtain
    \begin{align*}
        \mathrm{ICR}(\bm{\Sigma}_s,\bm{\Sigma}_{\xi,1})
        \leq \mathrm{ICR}(\bm{\Sigma}_s,\bm{\Sigma}_{\xi,2}).
    \end{align*}
    which completes the proof. 
\end{proof}